\newtheorem{lemma}{Lemma}
\newtheorem{theorem}{Theorem}
\theoremstyle{definition}
\newtheorem{definition}{Definition}[section]
\DeclareMathOperator*{\argmax}{arg\,max}
\newcommand\fs@norules{\def\@fs@cfont{\bfseries}\let\@fs@capt\floatc@ruled
  \def\@fs@pre{}%
  \def\@fs@post{}%
  \def\@fs@mid{\kern3pt}%
  \let\@fs@iftopcapt\iftrue}
\begin{document}
%
\title{Wearable Affective Life-Log System for Understanding Emotion Dynamics in Daily Life}
%
%
%

\author{Byung~Hyung~Kim
        and~Sungho~Jo,~\IEEEmembership{Member,~IEEE}
\thanks{B. Kim and S. Jo are with the School of Computing, KAIST, Republic of Korea.  e-mail: \{bhyung, shjo\}@kaist.ac.kr.}
\thanks{S. Jo is the corresponding author.}
}

\maketitle

\begin{abstract}
Past research on recognizing human affect has made use of a variety of physiological sensors in many ways. Nonetheless, how affective dynamics are influenced in the context of human daily life has not yet been explored. In this work, we present a wearable affective life-log system (ALIS), that is robust as well as easy to use in daily life to detect emotional changes and determine their cause-and-effect relationship on users' lives. The proposed system records how a user feels in certain situations during long-term activities with physiological sensors. Based on the long-term monitoring, the system analyzes how the contexts of the user's life affect his/her emotion changes. Furthermore, real-world experimental results demonstrate that the proposed wearable life-log system enables us to build causal structures to find effective stress relievers suited to every stressful situation in school life. 
\end{abstract}

\begin{IEEEkeywords}
Causality, Deep learning, Daily life, EEG,  Emotional Lateralization, Emotion Recognition, Physiological Signals, PPG, Lifelog, LSTM
\end{IEEEkeywords}
%
\IEEEpeerreviewmaketitle

\section{Introduction}
%
%
%
%
\IEEEPARstart{P}{eople} experience various emotions from a single event in different situations. For instance, today's coffee is not always the same as yesterday's coffee. The cup of coffee we drank today may not be as enjoyable as the cup of coffee we drank yesterday. While drinking coffee generally helps to reduce a person's stress, the stress-relieving effects of coffee may vary from day to day for many reasons. For a person who likes calm and quiet surronding, a cup of coffee drunk today in a crowded coffee shop with distracting background noise is likely to be less enjoyable than a cup of coffee drunk yesterday in the quiet kitchen of one's own home. This instance shows that a person can have different emotional responses to the same life events in different circumstances.

Why and how does a person experience various emotions from a single event under different situations? Answering this question could improve human life in a variety of ways, as by improving physical health. People who suffer from depression are more vulnerable to heart disease than people with no history of depression. Therefore, discovering life elements related to depression and offering guidance to avoid such elements can help sufferers to lessen their suffering and lead a meaningful life. In response to this question, recent researches on recognizing human affect has made use of a variety of physiological sensors in many ways. Such miniaturized physiological sensors and advanced mobile computing technologies have enabled people to monitor their physiological signals continuously in daily life with so-called ``everyday technology''~\cite{garbarino2014empatica,picard2016multiple,williams2015swarm,yuruten2014predictors}. Using these sensors, features such as heart rate variability, pulse oximetry, and galvanic skin response have been used to capture emotional changes so as to help us understand the etiology of mental health pathologies such as stress. 

However, how affective dynamics are influenced in the context of human daily life has not yet been explored. Most studies~\cite{zheng2018emotionmeter} have been limited to laboratory environments assessed by self-report tools~\cite{bradley1994measuring} or specialized questionnaires. These methods, however, provide evidence only on immediate affect from a single event and provide only a limited understanding of affective dynamics. In our study, understanding affective dynamics surrounding daily life mainly means discovering the causal relations between life contents and human affect in daily life. This understanding includes the correct affect elicitation mechanism and its effect on the emotional response. However, building reliable automated systems for understanding affective dynamics is a challenging problem involving various noises, a low signal-to-noise ratio (SNR) of physiological signals, inter- and intra-subject variability, and usability.

\subsection{Contribution}
Solving these challenges, we present a wearable affective life-log system, ALIS, which helps to bridge the gap between the low-level physiological sensor representation and the high-level context sensitive interpretation of affect. The proposed system records how a user feels in certain situations during long-term activities using physiological sensors. Based on the long-term monitoring, the system analyzes how the contexts of the user's life affect his/her emotion changes. Notably, the contributions of the proposed system, as an alternative to existing works, are: 
\begin{itemize}
  \item A wearable life-log system for capturing human affect: Easy to use in daily life. The proposed system is designed to provide comfort and performance to users during their long-term activities. Our device offers user advanced usability to users in their daily life. 
  \item A physiological affect model for capturing and tracing affective dynamics in real-world scenarios: By extending our previous work on physiological affect-based emotion recognition~\cite{kim2018deep}, we present a physiological affect network to learn affective dynamics with minimal supervision. Unlike previous works~\cite{zheng2018emotionmeter,valenza2014revealing} that focused on a specific personalized assessment for every event, our model captures affective states and continuously traces their changes, exploiting unlabeled and unbalanced real-world data through semi-supervised learning. 
  \item An asymmetric causal influence model of the life content-affect relationship: We present an asymmetric measure to identify the causal relationship between affective contents and human emotion in daily life. The model for the first time allows users to understand when, what, and how their surroundings affect them unawares in their daily life. 
  \item Extensive experiments to evaluate our proposed system in a long-term series of life-logging over several days in real-world scenarios, through the evaluation of which we demonstrate how emotion dynamics are influenced by the affective contents of human daily life using various contextual information acquired over up to 60 consecutive days.
\end{itemize}

\subsection{Problem Statements and Challenge Issues}
Our goal is to determine affective causality in daily life by analyzing emotional states and life contents encountered in various situations. We present the problem formulation of modeling the causal relation between life contents and emotional changes in the proposed ALIS. Without loss of generality, we treat the problem as a combination of two-stage problems. The first stage is the emotion recognition problem. Given physiological signals, the problem is the identification of the correct emotional state as
\begin{small}
\begin{equation}
  \hat{y}=\text{argmax}_{y \in \mathcal{Y}}P(y|\mathcal{X}_1, \mathcal{X}_2,\dots, \mathcal{X}_t),
\label{eq:ProblemStatementEmotion}
\end{equation}
\end{small}where $\mathcal{X}_t$ is a segment of physiological signals at time $t$ and $\mathcal{Y}$ is the set of emotional states such as happiness, surprise, anger, fear, and sadness.

The second stage is the affective causality learning problem. The problem is formulated as a graph 
\begin{small}
\begin{equation}
  \mathcal{G}=(\mathcal{M}, \mathcal{C}, E),
\label{eq:ProblemStatementCausality}
\end{equation}
\end{small}where $\mathcal{M} \in \mathbb{R}^{N_s \times N_e \times T}$ is the emotion occurrence matrix, $N_s$ is the number of situations and $N_e$ is the number of emotional states. $\mathcal{C} \in \mathbb{R}^{N_s \times N_c \times T}$ is the life contents occurrence matrix, where $N_c$ is the number of contents and $e_{ij} \in E$ indicates the affective causal effect of sequence $\mathcal{C}_i$ on sequence $\mathcal{M}_j$. 

The proposed ALIS is built upon our previous work on recognizing human emotion~\cite{kim2018deep}. Although the system has shown merits in discriminating physiological signals in different emotional states, non-trivial and challenging issues exist when the model is applied to demonstrate its efficacy in daily life. There are two main challenging problems in real-world environments as follows.
\subsubsection{Limited and unbalanced labeling problem in emotion recognition (C.1)}Despite the reliablility of the prior work in emotion recognition, obtaining accurate results of emotion recognition becomes a severe challenge when physiological signals accompany unreliable labels. In real-world scenarios, only a few labels are available where humans rate their feeling in indoor environments using self-reporting tools. The limited and unbalanced class labels deteriorate the performance of classifiers in a broad range of emotion classification problems in (\ref{eq:ProblemStatementEmotion}). 
\subsubsection{Sparsity problem in affective causality identification (C.2)}Discovering affective causality is essential to the understanding of users' affective dynamics in their daily life. However, it is unclear whether emotional causality exists in real-world situations where a user encounters life events in long temporal sequences. For the most part the issue of how to identify the causality direction and discover the influence of latent factors are challenging, as emotions are usually affected by various complex and subtle factors in daily life.

ALIS is designed to solve the two problems (\ref{eq:ProblemStatementEmotion}) and (\ref{eq:ProblemStatementCausality}), addressing the challenge issues (C.1) and (C.2) along with advanced wearable sensor technologies for multi-modal acquisition toward a deeper understanding of how humans act and feel while interacting with their real-world environment.  

The rest of this paper is organized as follows: We describe prior work in Section II with respect to emotion recognition, causal inference, and life-logging technologies for monitoring long-term activities without loss of usability. Section III covers the preliminaries of our previous work and causal inference. In Section IV, we present our system, describing the system design and framework. Sections V and VI evaluate our system on synthetic, public, and real data sets. Lastly, we concludes this paper with a discussion of future work in Section VII.

\section{Related Works}
\subsection{Physiology and Multi-Modality for Recognizing Human Affect}
Several physiology-based studies of human affect have been conducted that have advanced significantly in many ways over the past few decades~\cite{cornelius1996science,sander2005systems}. Along with this research area, we focus on using machine learning to identify physiological patterns that corresponds to the expressions of different emotions. In particular, EEG measures the electrical activity of the brain by recording its spontaneous electrical activity with multiple electrodes placed on the scalp. Despite its low spatial resolution on the scalp, its very high temporal resolution, non-invasiveness, and mobility are valuable in real-world environments such as clinical applications~\cite{smith2005eeg, lovato2014meta}. 

Most EEG-based emotion recognition systems have extracted and selected EEG-based features through electrode selection based on neuro-scienctific assumptions. Zheng~\textit{et al.} have collected EEG signals above the ears and presented a framework to classify four emotions~\cite{zheng2018emotionmeter}. They extracted EEG features from the six symmetrical temporal electrodes and showed that EEG has the advantage of classifying happy emotions. Similarly, Wang~\textit{et al.} investigated the characteristics of EEG features to assess the relationships with emotional states~\cite{wang2014emotional}. Their study showed that the right occipital and the parietal lobes are mainly associated with emotions in the alpha band, while the temporal lobes are associated with emotions in the beta band. Furthermore, the left frontal and right temporal lobes are associated with emotions in the gamma band. Petrantonakis and Leontios developed adaptive methods to investigate physiological associations between EEG signal segments and emotions in the time-frequency domain~\cite{petrantonakis2012adaptive}. They exploited the frontal EEG asymmetry and the multidimensional directed information approach to explain causality between the right and left hemispheres. These results have shown that emotional lateralization in the frontal and temporal lobes can be a good differentiator of emotional states.

EEG-based emotion recognition systems have often shown improved results when different modalities have been used~\cite{koelstra2012deap,soleymani2016analysis,subramanian2016ascertain}. Among the many peripheral physiological signals, PPG, which measures blood volume, is widely used to compute heart rate (HR). It uses optical-based technology to detect volumetric changes in blood in peripheral circulation. Although its accuracy is considered lower than that of electrocardiograms (ECGs), it has been used to develop wearable biosensors in clinical applications such as detecting mental stress in daily life~\cite{zhang2015troika} due to its simplicity. HR, as well as heart rate variability (HRV), has been shown to be useful for emotion assessment~\cite{chigira2011area,lyu2015measuring,sun2014moustress}. Over the past two decades, some reports have shown that HRV analysis can provide a distinct assessment of autonomic function in both the time and frequency domains. However, these assessments require high time and frequency resolutions. Due to these requirements, HRV has only been suitable for analyzing long-term data. Several researchers have focused on overcoming this limitation. Valenza~\textit{et al.}~\cite{valenza2014revealing} have recently developed a personal probabilistic framework to characterize emotional states by analyzing heartbeat dynamics exclusively to assess real-time emotional responses accurately. 

In these studies, distinct or peaked changes of physiological signals in the time or frequency domains at a single instantaneous time have been considered candidates. However, this approach is limited and cannot be used to fully describe emotion elicitation mechanisms due to their complex nature and multidimensional character. To overcome this problem, this work formulates emotion recognition as a spectral-temporal physiological sequence learning problem.
\subsection{Causality}
Understanding causal direction is essential to predicting the consequence of any intervention from a group of observation samples and is critical to many applications, including biology and social science~\cite{cai2017understanding}. We should note that causal learning is different from mainstream statistical learning methods in that it aims to discover the data-generation mechanism instead of characterizing the joint distribution of the observed variables; that is the most significant difference between causality and correlation. Discovering emotional influence has been a focus for testing whether the correlation exists in real-world applications. Wu~\textit{et al.}~\cite{wu2017inferring} developed a model to learn the emotional tags of social images and to verify the correlation between user demographics and emotional tags of social images. Jia~\textit{et al.}~\cite{jia2012can} exploited the social correlation among images to infer emotions from images. Their results showed that the emotions evoked by two consecutive images can be associated with each other from the same user uploads. Yang~\textit{et al.}~\cite{yang2016social} reported that the ability to emotionally influence others is closely associated with a user's social roles in image social networks. 

\subsection{Life-Log Technologies for Monitoring Long-Term Activities}
Self-tracking through logging various personal metrics is itself is not new. For instance, people with chronic conditions such as diabetes have long been monitoring their personal metrics to figure out how daily habits influence their symptoms. From a physiological point of view, EDA has been used to measure emotional changes for over 125 years~\cite{picard2016multiple}.  However, these methods have all been limited to highly controlled environments because of the complex and cumbersome machinery involved. In order to monitor long-term activities in any environments, wearable technologies are required to be ``as natural and unnoticeable as possible". Over the past decade, there have been significant advances in miniaturizing EDA sensors. Pho et al.~\cite{poh2010wearable} developed a compact and low-cost wearable EDA sensor to monitor long-term activities during daily activities outside of a laboratory. Garbarino et al.~\cite{garbarino2014empatica} presented a wearable wireless multi-sensor device, ``Empatica E3," which has four embedded sensors: PPG, EDA, 3-axis accelerometer, and temperature. The device is a new wrist-wearable sensor that is suitable for real-life applications. Wearable design techniques also have been developed satisfying the conditions for wearability. Williams et al.~\cite{williams2015swarm} presented SWARM, a wearable affective technology which includes a modular soft circuit design technique. SWARM is simply a scarf, but its modular actuation components can accommodate users' sensory capabilities and preferences so that the users can cope with their emotions. 

\section{Preliminaries}
\subsection{Deep Physiological Affect Network (DPAN) for Recognizing Human Emotions}
DPAN describes affect elicitation mechanisms used to detect emotional changes reflected by physiological signals. It takes two-channeled EEG signals underlying brain lateralization and a PPG signal as inputs and outputs a one-dimensional vector representing emotional states scaled from 1 to 9. Suppose that DPAN obtains the physiological signals at time $N$ over a spectral-temporal region represented by an $M \times N$ matrix with $P$ different modalities. From the two modalities of EEG and PPG sensors, physiological features $B_t$ and $H_t$ are extracted from the respective sensors as follows. 
\begin{small}
\begin{equation}
\label{eq:BrainLateralizationFeature}
    B_t = \xi_{rl}\circ\frac{(\zeta_l-\zeta_r)}{(\zeta_l+\zeta_r)},
\end{equation}
\end{small}where `$\circ$' denotes the Hadamard product. $\frac{(\zeta_l-\zeta_r)}{(\zeta_l+\zeta_r)}$ represents the spectral asymmetry and the matrix $\xi_{rl}$ is the causal asymmetry between the $r$ and $l$ EEG bipolar channels. The brain asymmetry feature $B_t$ describes the directionality and magnitude of emotional lateralization between the two hemispheres.

DPAN extracts the heart rate features $H_t$ over the $M \times N$ spectral-temporal domain, where frequencies with peaks in the PSD of the PPG signal are regarded as candidates of the true heart rate from the PPG signal $P_t$ at each time frame $t$. These data form a candidate set over time. The observation at a given time can then be represented by a tensor $\mathcal{X} \in \mathbb{R}^{M \times N \times P}$, where $\mathbb{R}$ denotes the domain of the observed physiological features. Then the learning problem then is the identification of the correct class based on the sequence of tensors $\mathcal{X}_1, \mathcal{X}_2, \dots, \mathcal{X}_t$. 
\begin{small}
\begin{equation}
\hat{y}=\argmax_{y \in \mathcal{Y}}P(y|\mathcal{X}_1, \mathcal{X}_2,\dots, \mathcal{X}_t), 
\end{equation}
\end{small}where $\mathcal{Y}$ is the set of valence-arousal classes. 
\begin{figure}[t]
    \centering
    \includegraphics[width=1\columnwidth]{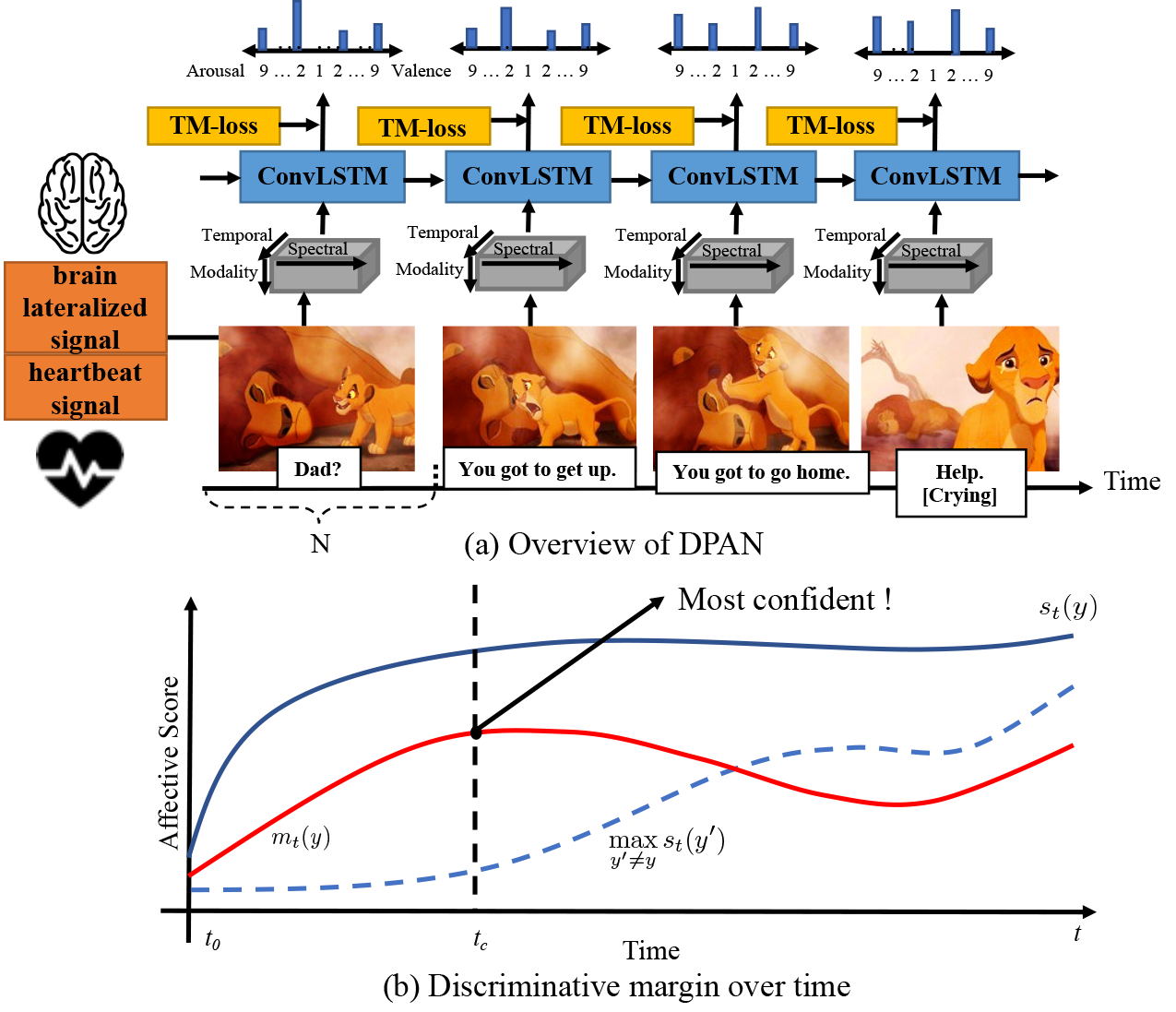}
    \caption{(a) An overview of DPAN. After every time interval $N$, the proposed DPAN first extracts two physiological features (brain lateralized and heartbeat features) and constructs a spectral-temporal tensor. These features are then fed into ConvLSTM to compute affective scores of emotions via our proposed loss model, temporal margin-based loss (TM-loss). The output at the final sequence is selected to represent an emotion over a 2-dimensional valence-arousal model for the entire sequence. (b) The discriminative margin $m_t(y)$ (red line) of an emotion $y$ started at $t_0$. The margin $m_t(y)$ is computed as the difference between the ground truth affective score $s_t(y)$ (blue line) and the maximum scores $\max_{y' \neq y}s_t(y')$ (dashed blue line) of all incorrect emotion states between $t_0$ and $t$. The model becomes more and more confident in classifying emotion states until the time $t_c$. However, after the time $t_c$, $\mathcal{L}_t$ are non-zero due to the violation of the monotonicity of the margin.}
    \label{fig:OverviewDPAN}
\end{figure}
\figurename~\ref{fig:OverviewDPAN} shows the entire overview of DPAN for the recognition of emotions. To solve the learning problem, DPAN feeds spectral-temporal tensor-based physiological features into ConvLSTMs to compute affective scores of emotions via the proposed loss model, temporal margin-based loss(TM-loss). The proposed TM-loss aims to learn the progression patterns of the emotions in training for developing reliable affect models. The TM-loss is a new formulation based on the temporal margin between the correct and incorrect emotional states. The reasoning for using the formulation is as follows: 
\begin{itemize}
    \item \textit{When more of a particular emotion is observed, the model should be more confident of the emotional elicitation as the recognition process progresses.}
\end{itemize}
The function constrains the affective score of the correct emotional state to discriminate its margin, which does not monotonically decrease with all the others while the emotion progresses. 
\begin{small}
    \begin{equation}
	\label{eq:TMloss1}
	\mathcal{L}_t = -\log s_t(y) + \lambda\max(0, \max_{t' \in [t_0, t-1]}m_{t'}(y) - m_t(y)), 
    \end{equation}
\end{small}where $-\log s_t(y)$ is the conventional cross-entropy loss function commonly to train deep-learning models, $y$ is the ground truth of emotion rating, $s_t(y)$ is the classified affective score of the ground truth label $y$ for the time $t$, and $m_t(y)$ is the discriminative margin of the emotion label $y$ at time $t$.
\begin{small}
    \begin{equation}
	\label{eq:TMloss2}
	m_t(y) = s_t(y)- \max\{s_t(y') | \forall y' \in \mathcal{Y}, y' \neq y\}. 
    \end{equation}
\end{small} 
$\lambda \in \mathbb{Z}^+$ is a relative term to control the effects of the discriminative margin. As described in (\ref{eq:TMloss1}) and (\ref{eq:TMloss2}), a model becomes more confident in discriminating between the correct state and the incorrect states over time.
With this function DPAN is encouraged to maintain monotonicity in the affective score as the emotion training progresses. As shown in \figurename~\ref{fig:OverviewDPAN}(b), after the time $t_c$, the loss becomes non-zero due to the violation of the monotonicity of the margin. Note that the margin $m_t(y)$ of the emotion $y$ spanning $[t_0, t]$ is computed as the difference between the affective score $s_t(y)$ for the ground truth $y$ and the maximum classification scores $\max_{y' \neq y} s(y')$ for all incorrect ratings at each time point in $[t_0, t]$. 
\subsection{Conditional Independence Test}
To discover the causality between affective dynamics and life contents and answer (C.2), we test the conditional independence with three sequences $A_i$, $A_j$, and $A_k$. Suppose the three sequences have the same length $T$, the test is to verify the statistical significance of the statement $A_i \perp A_j | A_k$. Considering each triplet ($A_i(t)$, $A_j(t)$, $A_k(t)$) for each $1 \leq t \leq T$ as a sample over three variables, a 3-D contingency table $C$ records the number of triplet samples on the three variables such that 
\begin{small}
    \begin{equation}
	C_{opq} = |\{1 \leq t \leq T | A_i(t) = o, A_j(t) = p, A_k(t) = q \}|.
    \end{equation}
\end{small}Note that the expectation of $C_{opq}$ under the null hypothesis can be estimated by:
\begin{small}
    \begin{equation}
	E(C_{ops})=(C_{*pq}C_{o*q})/(C_{**q}),
    \end{equation}
\end{small}where $C_{*pq}$, $C_{o*q}$, and $C_{op*}$ are the marginals of the counts with $A_i$, $A_j$, $A_k$, respectively. For the test, we use the standard $G^2$ conditional independence test, which returns the Kullback-Leibler divergence between the distributions of $C_{opq}$ and $E(C_{opq})$ over all three variables:
\begin{small}
    \begin{equation}
	G^2 = 2 \sum_{o,p,q} C_{opq} \text{ln} \frac{C_{opq}}{E(C_{opq})},    
    \end{equation}
\end{small}which follows a $\chi^2$ distribution with degree of freedom $(|A_i| -1) * (|A_j| -1) *|A_k|$. For removing the sparsity in the table $C$, the degree of freedom is penalized by the number of zero cells as in \cite{imbens2015causal}. 

\section{Wearable Affective Lifelog System}
\begin{figure}[t]
    \centering
    \includegraphics[width=1\columnwidth]{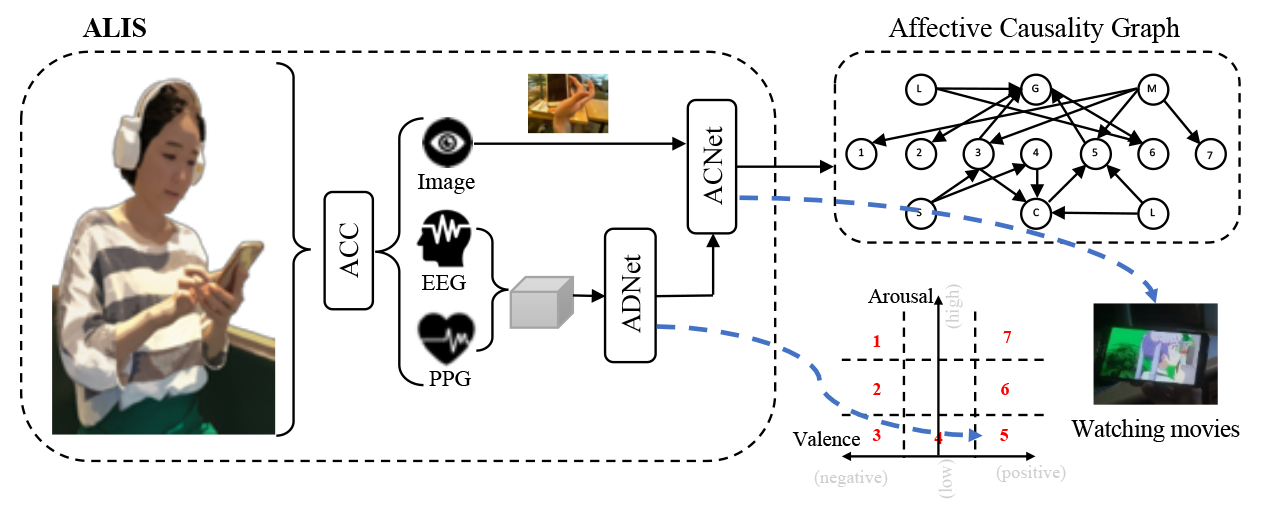}
    \caption{An Overview of ALIS. ALIS consists of an Affective Contents Collector (ACC), Affective Dynamics Network (ADNet), and Affective Causality Network (ACNet). ACC collects an user's contextual information in situations with frontal images and emotion measured by EEG and PPG signals. Given this information, ADNet detects emotional changes, which are used as an input with frontal images for ACNet to discover the causal relationship between emotions and situations.}
    \label{fig:OverviewALIS}
\end{figure}
The Wearable Affective Lifelog System (ALIS) consists of three main parts: The Affective Contents Collector (ACC), Affective Dynamics Network (ADNet), and Affective Causality Network (ACNet). \figurename~\ref{fig:OverviewALIS} shows the entire framework of ALIS. First, ACC gathers contextual information continuously surrounding the wearer in daily life. The logged data are then transferred into ADNet, which aims to find answers about under which situations and to what extent a human feels the elicited affect. ACNet discovers the dynamic causal relationship between the situation faced and the human emotion explored by ADNet. It provides an intuitive understanding of affect dynamics for users. The following sections describe the details of each component.

\subsection{Affective Contents Collector for Logging Data}
\begin{figure}[t]
    \centering
    \includegraphics[width=1\columnwidth]{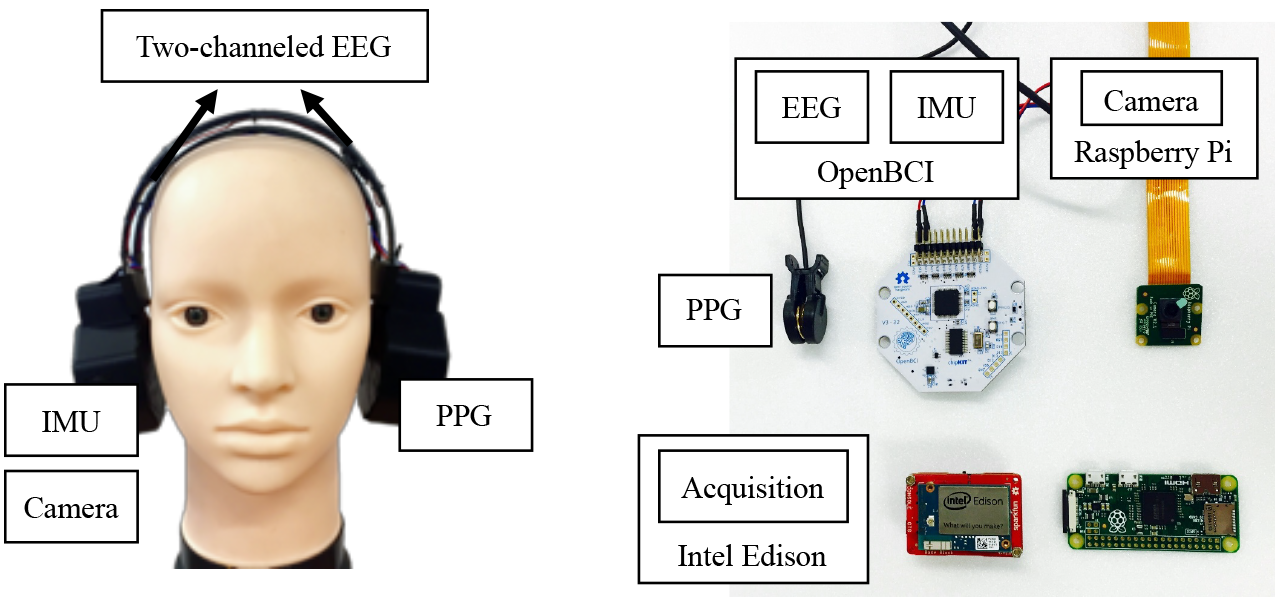}
    \caption{The Affective Contents Collector and its components; IMU, EEG, PPG sensors, and a tiny frontal camera. The location of two EEG electrodes (F3, F4) on the 10-20 international system.}
    \label{fig:OverviewACC}
\end{figure}
ACC is a simple device designed to be easily wearable for users to act freely in everyday situations (See \figurename~\ref{fig:OverviewACC}) as well as to collect human affect correctly. Since human affect is sophisticated and subtle, it is vulnerable to personal, social, and contextual attributes. The noticeability and visibility of wearable devices could elicit unnecessary and irrelevant emotions. Therefore, recording human affect should be unobtrusive when measured in the natural environment. To design an unnoticeable device, we imitated the design of existing easy-to-use wireless headsets. We note that the term ``unobtrusive device'' in this paper means that it is not easily noticed or does not draw attention to itself. The term does not imply that our device aims to be small or concealable. This easy-to-use device provides comfort and performance to users during long-term activities. 

Everyday technology requires wearable systems to have the unprecedented ability to perform the comfortable, long-term, and in situ assessment of physiological activities. However, the development of practical applications is challenging because of the cumbersomeness of the equipment that requires multiple channels to obtain reliable signals and the complexity of setting up experiments. To use body sensors with a guarantee of both reliability and simplicity, we designed the proposed device with a tiny PPG sensor and a minimum configuration of a two-channel EEG, the lowest number of channels necessary to learn patterns of lateralized frontal cortex activity involved in human affect.  

While satisfying the two criteria, our device consists of multimodal sensors to capture various emotions surrounding daily life as follows: 

\begin{itemize}
    \item Frontal Camera for Collecting Visual Contents: Visual information has been widely used to detect situations faced by the user. The study of recognizing scenes and activities by analyzing images from a camera has provided an understanding of contextual information. Hence, in our system, a small frontal viewing camera with a 30 fps sampling rate (Raspberry Pi Zero Camera) was used to record images.  
    \item Small Physiological Sensor to Capture Human Affect: The analysis of patterns of physiological changes has been increasingly studied in the context of affect recognition. To capture this information, we used a two-channel EEG sensor on OpenBCI on the left and right hemispheres and a small ear-PPG sensor, with sampling rates of 250 Hz and 500 Hz, respectively.  
\end{itemize}

\subsection{Affective Dynamics Network for Recognizing Emotions}
\begin{figure}[t]
    \centering
    \includegraphics[width=1\columnwidth]{./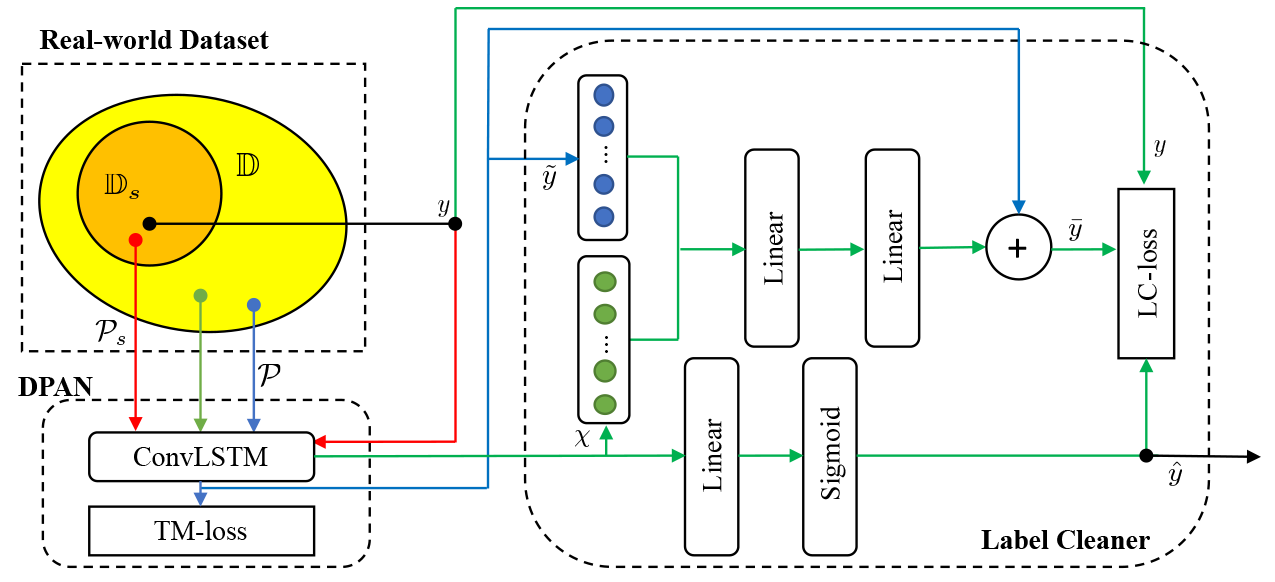}
    \caption{An overview of ADNet. The model first trains physiological signals associated with their labels $y$ on the dataset $\mathbb{D}_s$ by DPAN (red lines). The model then uses the learned parameters on the dataset $\mathbb{D}$ for predicting noisy pseudo-labels $\tilde{y}$ (blue lines), which are fed into a sub-network Label Cleaner conditioned on physiological features $\chi$ from ConvLSTMs. Within the Label Cleaner network, a residual architecture learns the difference between the noisy $\tilde{y}$ and clean labels $y$ (green lines). Finally, the model predicts cleaned labels $\hat{y}$ penalized by a joint loss function (LC-loss).}
    \label{fig:OverviewADNet}
\end{figure}
ADNet aims to solve the problem in (\ref{eq:ProblemStatementEmotion}), addressing the challenging questions (C.1) on a real-world large dataset. Suppose the dataset $\mathbb{D}$ naturally comprises a large number of unlabelled physiological signals $\mathcal{P}$ and a subset $\mathbb{D}_s$ where a small number of ground-truth labels $y$ by self-reporting are available. Our network first learns the representations of the physiological signals $\mathcal{P}$ according to the affective labels on the subset $\mathbb{D}_s$ by DPAN~\cite{kim2018deep} and produces noisy pseudo-labels $\tilde{y}$ on the dataset $\mathbb{D}$. Given the dataset $\mathbb{D} = \{(\mathcal{P}_i, \tilde{y}_i), ... \}$ and its subset $\mathbb{D}_s = \{(\mathcal{P}_i, \tilde{y}_i, y_i), ... \}$, the proposed framework jointly learns to reduce the label noise and predict more accurate labels $\hat{y}$ on the dataset $\mathbb{D}$.

\figurename~\ref{fig:OverviewADNet} shows the overview of ADNet. The network first predicts sets of affective labels $\tilde{y} = (\tilde{\mathcal{V}}, \tilde{\mathcal{A}})$ on the dataset $\mathbb{D}$ upon physiological features learned by DPAN on the subset $\mathbb{D}_s$. While DPAN has shown its superiority in classifying emotions, its predicted labels has contained noise. To overcome this issue, ADNet comprises a sub-network, which learns to map noisy labels $\tilde{y}$ to clean labels $y$, conditioned on physiological features from ConvLSTMs in DPAN. The network is supervised by the human reported labels $y$ and follows a residual architecture so that it only needs to learn the difference between the noisy and clean labels. In particular, to handle the sparsity in noisy labels, ADNet encodes the emotional occurrence $\tilde{y}$ of each of the $d$ classes in valence $\tilde{\mathcal{V}}$ and arousal $\tilde{\mathcal{A}}$ ratings into a pair of $d$-dimensional vector $[0,1]^d$. Similarly, the model projects the physiological features $\mathcal{X}$ in to a low dimensional embedding, then all embedding vectors from the two modalilties are concatenated and transformed with a hidden linear layer followed by a projection back into the high dimensional label space. At the same time, the network shares the physiological features extracted by ConvLSTMs in DPAN and learns to directly predict clean labels $\hat{y}$ following a sigmoidal function. The predicted labels $\hat{y}$ are supervised by either the output of DPAN or a human. 

To train the ADNet we formulate a joint loss function as below: 
\begin{small}
\begin{equation}
  \mathcal{L}_c = \sum_i|\bar{y}_i - y_i|  - \sum_j[u_j \log(\hat{y}_j) + (1-u_j)\log(1-\hat{y}_j)],
\label{eq:ADNetLossFunction}
\end{equation}
\end{small}where $u_j$ is $y_j$ when the SAM-ratings are available, otherwise $\bar{y}_j$. The LC-loss $\mathcal{L}_c$ is the combination of 1) the difference between the cleaned labels and the corresponding ground truth verified labels and 2) the cross-entropy to capture the difference between the predicted labels and the target labels. The cross-entropy term is only propagated to $\hat{y}_j$. The cleaned labels $\bar{y}$ are treated as constants with respect to the classification and only incur gradients from the LC-loss.

\subsection{Affective Causality Network}  
ACNet aims to solve the problem in (\ref{eq:ProblemStatementCausality}). The goal is to derive the affective causality by analyzing emotional changes and the user's relevant situations. Suppose the emotion sequence $\mathcal{M}(1:T)$ is a binary stochastic process during a discrete time interval [$1, T$], where $T$ is the maximal length of the interval. Then, an element $\mathcal{M}_j$ in the sequence $\mathcal{M}(1:T)$ is a binary indicator of the occurrence of a certain emotion $j$ at time $t$. With this notation, $\mathcal{M}^{\eta}_j$ is generated with each concatenated element $\mathcal{M}^{\eta}_j(t) = (\mathcal{M}_j(t-\eta), \ldots, \mathcal{M}_j(t))$ at each timestamp $t \leq T$. In the same way, an element $\mathcal{C}_i$ in a situation sequence $\mathcal{C}(1:T)$ is a binary indicator of the occurrence of certain situation $i$ at time $t$. Based on these notations, the time-varying affective network and the affective causality are defined with the formulation of the related learning task. 
\begin{definition}
    The time-varying affective network is denoted as $\mathcal{G} = (\mathcal{M}, \mathcal{C}, E)$, where $\mathcal{C}_i$ and $\mathcal{M}_j$ are the situation $i$ and the emotional state $j$ sequences, and $e_{ij} \in E$ indicates the affective causal effect of the sequence $\mathcal{C}_i$ on the sequence $\mathcal{M}_j$. 
    \label{def:AffectiveNetwork}
\end{definition}
\begin{definition}
    Given $n$ situation and emotion sequences $\{\mathcal{C}_1, \ldots, \mathcal{C}_i, \mathcal{M}_1, \ldots, \mathcal{M}_j \}$ from a user, the directed graph $\mathcal{G}$ is constructed in the underlying affective causality model while addressing confounding factors. 
    \label{def:AffectiveCausality}
\end{definition}

By representing each of the emotion and situation variables as a node, ACNet could be formulated as a directed graph $\mathcal{G}$ over the variable $\mathcal{C}_i \cup \mathcal{M}_j$ such that each edge $\mathcal{C}_i \rightarrow \mathcal{M}_j$ indicates the affective causal effect of sequence $\mathcal{C}_i$ on sequence $\mathcal{M}_j$. In the network, there is a parameter associated with a node or an edge between $\mathcal{C}_i$ and $\mathcal{M}_j$. Based on the descriptions above, a network generally consists of two components: 1) a directed graph $\mathcal{G}$ used to model the causal structure among the sequences and 2) association parameters on the edges used to model the causal phenomena. These two components characterize the global and local properties of the interaction between emotions and life contents, respectively, the combination of which provides a general guideline ruling the events happening on the two sequences. 

Affective causality is a computational asymmetric measure to determine the causal relationship between affective dynamics and situations. The computation is based on conditional independence testing to detect the relationship with latent confounders underneath the observational two sequences. The causal structure learning method is inspired by Cai \textit{et al.}'s work~\cite{cai2017understanding}, but differs in that the ACNet has two different variables $\mathcal{M}$ and $\mathcal{C}$ as input nodes, considering sparse relations. The proposed model analyzes this causal problem, considering 1) with and 2) without confounding factors.

\subsubsection{Learning without confounding factors}
The interaction without any latent factors can observe that the state $\mathcal{C}_i$ at timestamp $t-1$ affects the state of $\mathcal{M}_j$ at timestamp $t$, while the reverse does not hold. This observation can be formalized in the language of statistical testing with Lemma \ref{lemma1}. 
\begin{lemma}
\label{lemma1}
Given two dependent sequences $\mathcal{C}_i \rightarrow \mathcal{M}_j$ without a connected latent variable, the following asymmetric dependence relations hold: 1) there exists a delay $\eta_c$ satisfying $\mathcal{C}_i \perp \mathcal{M}^1_j | \mathcal{C}^{\eta_c}_i$ and there does not exist a delay $\eta_m$ satisfying $\mathcal{M}_j \perp \mathcal{C}^1_{i} | \mathcal{M}^{\eta_m}_j$.
\end{lemma}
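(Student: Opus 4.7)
The plan is to model the joint dynamics of $\mathcal{C}_i$ and $\mathcal{M}_j$ as a structural causal model whose DAG, obtained by unrolling the sequences in time, contains only time-respecting edges, and then to derive both conclusions from a d-separation argument together with the usual Markov and faithfulness assumptions that translate graph-theoretic statements into the required conditional (in)dependences. Concretely, I would treat each $\mathcal{C}_i(t)$ and $\mathcal{M}_j(t)$ as a node, encode the stated direction $\mathcal{C}_i \rightarrow \mathcal{M}_j$ as arrows from past values of $\mathcal{C}_i$ into future values of $\mathcal{M}_j$ up to some maximum lag, and assume a finite Markov order $k$ for the joint process. The hypothesis of no connected latent variable rules out any back-door between the two sequences, so the only edges into $\mathcal{C}_i(t)$ come from $\mathcal{C}_i$'s own past.

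For part (1), I would take $\eta_c$ to be at least as large as the Markov order $k$. Then the window $\mathcal{C}_i^{\eta_c}$ already contains every parent of the current $\mathcal{C}_i$ in the unrolled DAG, so by the local Markov property $\mathcal{C}_i$ is independent of every non-descendant conditional on $\mathcal{C}_i^{\eta_c}$. Because no arrow enters $\mathcal{C}_i$ from $\mathcal{M}_j$, the short history $\mathcal{M}_j^{1}$ consists entirely of non-descendants of the current $\mathcal{C}_i$; equivalently, every path from $\mathcal{M}_j^{1}$ to $\mathcal{C}_i$ must enter $\mathcal{C}_i$ through one of its parents, all of which are in the conditioning set and thus block it. This delivers $\mathcal{C}_i \perp \mathcal{M}_j^{1} \mid \mathcal{C}_i^{\eta_c}$.

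For part (2), I would exhibit an unblocked directed path from $\mathcal{C}_i^{1}$ into $\mathcal{M}_j$ that no conditioning set of the form $\mathcal{M}_j^{\eta_m}$ can cut. The hypothesis $\mathcal{C}_i \rightarrow \mathcal{M}_j$ provides a direct edge $\mathcal{C}_i(t') \rightarrow \mathcal{M}_j(t'')$ for some $t' < t''$; since a single directed edge can be d-separated only by conditioning on one of its endpoints, and $\mathcal{M}_j^{\eta_m}$ consists purely of past values of $\mathcal{M}_j$ that do not include $\mathcal{C}_i(t')$, the edge remains d-connecting no matter how large $\eta_m$ is. Invoking faithfulness then converts this d-connection into a genuine conditional dependence, so no choice of $\eta_m$ can establish the independence $\mathcal{M}_j \perp \mathcal{C}_i^{1} \mid \mathcal{M}_j^{\eta_m}$.

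The main obstacle I anticipate is assigning the bare symbols $\mathcal{C}_i$ and $\mathcal{M}_j$ in the statement a crisp interpretation — most cleanly as single "current" slices, with the superscripted versions denoting past windows — and then choosing the time indices so that the direct causal edge falls outside the history window used for conditioning. A secondary subtlety is that part (2) is really a claim about d-connection, so I need the standard faithfulness assumption (no accidental cancellation in the joint law) to turn that into an observable statistical dependence; without it, a pathologically tuned distribution could momentarily hide the dependence even though the causal edge is present, weakening the "does not exist" quantifier in the lemma.
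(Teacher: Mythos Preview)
Your proposal is correct and follows essentially the same approach as the paper: both arguments turn on (1) the fact that $\mathcal{C}_i(t)$ has parents only in its own past, so conditioning on a sufficiently long window $\mathcal{C}_i^{\eta_c}$ screens off $\mathcal{M}_j^1$, and (2) the direct edge from $\mathcal{C}_i(t-1)$ into $\mathcal{M}_j(t)$, which no amount of conditioning on $\mathcal{M}_j$'s own history can block. Your version is simply a more rigorous rendering of the paper's brief informal proof, making explicit the Markov-order choice for $\eta_c$ and the faithfulness assumption underlying part (2) that the paper leaves tacit.
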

\begin{proof}
Suppose $\eta_m$ and $\eta_c$ are the influence lags of $\mathcal{M}$ and $\mathcal{C}$, respectively. In the case of the 1), the state of $\mathcal{C}_i$ at $t$ time is determined only by its previous states $\mathcal{C}^{\eta_c}_i$. Hence, $\mathcal{C}_i \perp \mathcal{M}^1_j | \mathcal{C}^{\eta_c}_i$ naturally holds. In the case of the 2), there is no variable $\eta_m$ to $\mathcal{M}_j \perp \mathcal{C}^1_i | \mathcal{M}^{\eta_m}_j$ because the state of $\mathcal{M}_j$ at $t$ time is directly influenced the previous state of $\mathcal{C}_i$ at $t-1$.
\end{proof}

\subsubsection{Learning with confounding factors}
Sequences under the existence of confounding factors 1) behave similarly because of common characteristics or 2) interact independently over the timeline. Both the situation $\mathcal{C}_i$ and emotion $\mathcal{M}_j$ sequences are affected by a constant latent variable. In such a case, $\mathcal{C}_i(t)$ is thus dependent on the $\mathcal{M}_j(t-1)$ given any previous states of $\mathcal{C}_i$. Similarly, $\mathcal{M}_j(t)$ also depends on $\mathcal{C}_i(t-1)$ given any previous states of $\mathcal{M}_j$. The confounding factor, which is itself an independent variable over time, is underneath the two sequences under test. When this case occurs, a positive statistical dependence between $\mathcal{C}$ and $\mathcal{M}$ is observed. But the states of $\mathcal{C}_i(t)$ could be completely independent of $\mathcal{M}_j(t-1)$ given its previous states $\mathcal{C}_i(t-\eta_c-1:t-1)$, and $\mathcal{M}_j(t)$ could also be independent of $\mathcal{C}_i(t-1)$ given its previous states $\mathcal{M}_j(t-\eta_m-1:t-1)$. The following lemma could be recognized using the same group of conditional independence tests.
\begin{lemma}
If there is a latent factor between $\mathcal{C}_{i}$ and $\mathcal{M}_j$, the following symmetric relations hold: 1) there does not exist any delay $\eta$ satisfying $\mathcal{C}_i \perp \mathcal{M}^1_j | \mathcal{C}^{\eta_c}_i$ nor $\mathcal{M}_j \perp \mathcal{C}^1_i | \mathcal{M}^{\eta_m}_j$, 2) there exists delay $\eta_m$ and $\eta_c$ satisfying $\mathcal{C}_i \perp \mathcal{M}^1_j | \mathcal{C}^{\eta_c}_i$ and $\mathcal{M}_j \perp \mathcal{C}^1_{i} | \mathcal{M}^{\eta_m}_j$, respectively.
\end{lemma}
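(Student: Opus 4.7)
The plan is to mirror the proof of Lemma~\ref{lemma1} but to augment the underlying DAG with a latent confounding node $L$ that points to both $\mathcal{C}_i$ and $\mathcal{M}_j$, and then trace which d-separation statements survive. The key observation is that $L$ opens a backdoor path between the two observed sequences at each time slice, so whether a conditioning set built from the history of a single sequence still blocks all active paths reduces to whether that history is a sufficient statistic for the unobserved $L$. The two sub-cases of the lemma correspond exactly to the two qualitative scenarios listed before the statement: a time-varying $L$ that injects fresh common randomness at every step, versus a time-invariant $L$ whose influence is absorbed into the initial conditions.

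For part~1, I would let $L$ be a latent process that drives $\mathcal{C}_i(t)$ and $\mathcal{M}_j(t)$ independently at each $t$. For any candidate delay $\eta_c$, the backdoor path from $\mathcal{M}^1_j$ into $\mathcal{C}_i$ through $L$ is not intercepted by conditioning only on $\mathcal{C}^{\eta_c}_i$, because $L$ is never observed and cannot be reconstructed from a finite window of $\mathcal{C}_i$ alone. Hence $\mathcal{C}_i \not\perp \mathcal{M}^1_j \mid \mathcal{C}^{\eta_c}_i$ for every $\eta_c$; the symmetric argument, with the roles of $\mathcal{C}$ and $\mathcal{M}$ exchanged, rules out the existence of an $\eta_m$ for the second conditional independence. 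Formally I would argue by contradiction: if such an $\eta_c$ existed, the observed correlation between the two sequences would have to be explainable entirely by the $\mathcal{C}_i$ history, contradicting the assumption that $L$ is a non-degenerate common driver not measurable through $\mathcal{C}_i$.

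For part~2, I would take $L$ to be time-invariant and assume the joint process is Markov of bounded order conditional on $L$. Then a sufficiently long window $\mathcal{C}^{\eta_c}_i$ of past values of $\mathcal{C}_i$ is a sufficient statistic for $L$ as far as predicting $\mathcal{C}_i(t)$ is concerned, so conditioning on it blocks both the backdoor path through $L$ and the directed chain through the own past of $\mathcal{C}_i$, yielding $\mathcal{C}_i \perp \mathcal{M}^1_j \mid \mathcal{C}^{\eta_c}_i$. The same reasoning with the roles swapped produces $\eta_m$. The main obstacle I expect is rigorously justifying that a \emph{finite} $\eta_c$ and $\eta_m$ suffice to screen off $L$; I would handle this by adopting the finite-memory stationarity assumption implicit in~\cite{cai2017understanding}, under which the sufficient-statistic property is a standard consequence, and then observing that the resulting independence is precisely what the $G^2$ test described in Section~III detects, closing the loop with the rest of ACNet.
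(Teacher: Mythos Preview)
Your proposal has the two latent-factor scenarios swapped, and this reversal makes both arguments fail.

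In the paper, part~1 (no $\eta$ works for either conditional independence) corresponds to the \emph{constant} latent variable: because $L$ is the same at every time step, it opens a backdoor from $\mathcal{M}_j(t-1)$ to $\mathcal{C}_i(t)$ that no finite window $\mathcal{C}^{\eta_c}_i$ can block, since a finite history of $\mathcal{C}_i$ is generically not a sufficient statistic for $L$. Part~2 (both $\eta_c,\eta_m$ exist) corresponds to the latent factor that is \emph{independent across time}: the proof in the paper is simply that $L(t)\perp L(t-1)$, so $L(t)$, which drives $\mathcal{C}_i(t)$, carries no information about $L(t-1)$, which drives $\mathcal{M}_j(t-1)$; once the self-influence lag $\eta_c$ absorbs the autoregressive dependence of $\mathcal{C}_i$, nothing is left to connect $\mathcal{C}_i(t)$ and $\mathcal{M}_j(t-1)$, and symmetrically for $\eta_m$.

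You assigned the time-varying $L$ to part~1 and the time-invariant $L$ to part~2, and your own reasoning actually signals the problem. For your part~1 you argue that $L$ ``cannot be reconstructed from a finite window,'' but with fresh i.i.d.\ $L(t)$ there is nothing to reconstruct: the lagged value $\mathcal{M}_j(t-1)$ depends on $L(t-1)$, which is independent of $L(t)$ and hence of $\mathcal{C}_i(t)$ once you condition on $\mathcal{C}^{\eta_c}_i$. For your part~2 you need a finite $\eta_c$ to screen off a \emph{constant} $L$, and you correctly flag this as the main obstacle---because it is in fact false without very special assumptions, which is exactly why the paper puts the constant-$L$ case under part~1. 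Swap the two cases and drop the sufficient-statistic argument; the i.i.d.\ case then goes through by the simple temporal-independence step the paper uses.
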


\begin{proof}
For 1) they are dependent on each other in the given condition set without the latent factor because $\mathcal{C}_i$ and $\mathcal{M}_j$ are dependent on the latent factor. For 2), suppose $\eta_c$ and $\eta_m$ be the self-influence lag of $\mathcal{C}_i$ and $\mathcal{M}_j$, respectively, the latent factor at time $t$ is independent of the latent factor at time $t-1$. Therefore, $\mathcal{C}_{i} \perp \mathcal{M}^1_j | \mathcal{C}^{\eta_c}_i$ and $\mathcal{M}_j \perp \mathcal{C}^1_i | \mathcal{M}^{\eta_m}_j$ hold. 
\end{proof}
Based on the above lemmas, the following theorem confirms the correctness of asymmetric measure used for causal direction detectioni with the following statements.

$S_1$: $\exists\eta_c$ satisfying $\mathcal{C}_i \perp \mathcal{M}^1_j | \mathcal{C}^{\eta_c}_i$

$S_2$: $\exists\eta_m$ satisfying $\mathcal{M}_j \perp \mathcal{C}^1_i | \mathcal{M}^{\eta_m}_j$

\begin{theorem}
Given two sequences $\mathcal{C}_i$ and $\mathcal{M}_j$, the following propositions on the causal structure between the two sequences hold: 1) $\mathcal{C}_i \rightarrow \mathcal{M}_j$ in $\mathcal{G}$, if $S_1 \wedge \neg S_2$, 2) 1) $\mathcal{C}_i \leftarrow \mathcal{M}_j$ in $\mathcal{G}$, if $\neg S_1 \wedge S_2$, and 3) there is a latent factor between $\mathcal{C}_i$ and $\mathcal{M}_j$, if $S_1 \wedge S_2$ or $\neg S_1 \wedge \neg S_2$.
\end{theorem}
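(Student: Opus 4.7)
The plan is to prove the theorem by inverting the characterizations supplied by Lemma 1 and Lemma 2 through an exhaustive case analysis on the four possible truth assignments of the pair $(S_1, S_2)$. A preliminary step is to observe that the two lemmas together give forward implications from causal structures to patterns of $(S_1, S_2)$: by Lemma 1, $\mathcal{C}_i \to \mathcal{M}_j$ with no latent confounder implies $S_1 \wedge \neg S_2$; by a symmetric reapplication of Lemma 1 with the roles of $\mathcal{C}_i$ and $\mathcal{M}_j$ swapped, $\mathcal{M}_j \to \mathcal{C}_i$ with no latent confounder implies $\neg S_1 \wedge S_2$; and by Lemma 2, the presence of a latent confounder forces the symmetric pattern $(S_1 \wedge S_2) \vee (\neg S_1 \wedge \neg S_2)$. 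Assuming that the two dependent sequences must fall into exactly one of these three causal categories exhausts the space of candidate structures.

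Given these forward implications, I would prove each of the three statements by contradiction, ruling out all competing structures. For claim (1), assume $S_1 \wedge \neg S_2$. If there were a latent factor, Lemma 2 would force either $S_1 \wedge S_2$ or $\neg S_1 \wedge \neg S_2$, both of which contradict $\neg S_2 \wedge S_1$. If instead $\mathcal{M}_j \to \mathcal{C}_i$, the symmetric version of Lemma 1 would yield $\neg S_1 \wedge S_2$, again a contradiction. The only remaining possibility consistent with the hypothesis is $\mathcal{C}_i \to \mathcal{M}_j$. Claim (2) follows by the mirror argument, swapping the roles of $\mathcal{C}_i$ and $\mathcal{M}_j$ throughout. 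For claim (3), assume $S_1 \wedge S_2$ (respectively $\neg S_1 \wedge \neg S_2$). Neither a direct edge $\mathcal{C}_i \to \mathcal{M}_j$ nor $\mathcal{M}_j \to \mathcal{C}_i$ is admissible, since each would force exactly one of $S_1, S_2$ to hold by Lemma 1 and its symmetric counterpart. Hence a latent factor must be present, and by Lemma 2 the two admissible sub-cases correspond precisely to the two symmetric patterns in the hypothesis.

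The main obstacle is justifying that the enumeration is exhaustive, i.e., that the forward implications in Lemma 1 and Lemma 2 genuinely partition the space of causal explanations into three mutually exclusive classes that map to three disjoint subsets of $\{(T,F),(F,T),(T,T),(F,F)\}$. Within the framework of the paper this relies on two implicit assumptions: that the observed dependence between $\mathcal{C}_i$ and $\mathcal{M}_j$ must arise from either a direct causal edge or a common latent cause, and that the symmetric version of Lemma 1 (swapping $\mathcal{C}_i$ with $\mathcal{M}_j$) is valid by the symmetry of its hypotheses. Once these are granted, the three claims of the theorem follow as an immediate lookup in the table mapping causal structures to $(S_1, S_2)$ signatures.
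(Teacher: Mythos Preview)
Your proposal is correct and follows essentially the same route as the paper: assume the three causal structures are exhaustive, use Lemma~1 (and its symmetric counterpart) together with Lemma~2 as forward implications to $(S_1,S_2)$ signatures, and recover each claim by elimination. The paper's own proof only spells out case~(1) explicitly and leaves the other two implicit, so your treatment is in fact more complete, and your flagging of the exhaustiveness assumption and the appeal to the symmetric form of Lemma~1 makes explicit what the paper takes for granted.
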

\begin{proof}
    Assuming there are only three directions between two sequences $\mathcal{C}_i$ and $\mathcal{M}_j$; 1) $\mathcal{C}_i \rightarrow \mathcal{M}_j$, 2) $\mathcal{M}_j \rightarrow \mathcal{C}_i$, and 3) $\mathcal{C}_i \leftarrow \mathcal{H} \rightarrow \mathcal{M}_j$. $\mathcal{H}$ is a latent factor. For the first case, $S_1 \wedge \neg S_2 \Rightarrow \mathcal{C}_i \rightarrow \mathcal{M}_j$, and the causal structure of $S_1 \wedge \neg S_2$ based on Lemma~\ref{lemma1} cannot be $\mathcal{C}_i \leftarrow \mathcal{M}_j$. There is no confounding factor between the two sequences. Therefore, the causal structure must be $\mathcal{C}_i \rightarrow \mathcal{M}_j$. 
\end{proof}

\subsubsection{Affective Causality Direction Learning Algorithm}
\begin{algorithm}[b]
\caption{Affective Causality Direction Learning}
\begin{algorithmic}[1]
\renewcommand{\algorithmicrequire}{\textbf{Input:}}
\renewcommand{\algorithmicensure}{\textbf{Output:}}
\REQUIRE $\mathcal{C}_i$ : situation $i$ sequence
\\ $\mathcal{M}_j$: emotion state $j$ sequence 
\\ $\alpha$ : confidence threshold
\\ $T$ : maximal timestamp
\\ $\mathcal{F}$: affective pair set $\{\mathcal{C}_i, \mathcal{M}_j\}$ 
\ENSURE  The directed graph $\mathcal{G}$ 
\\ \textit{Initialization} : set $\mathcal{G}$ as empty set;
\FOR {each $i$ and $j$ in a pair $\mathcal{F}$ in a situation}
	\IF {$\nexists \mathcal{F}' \in \mathcal{F} - \{\mathcal{C}_{i}, \mathcal{M}_{j} \}, \mathcal{C}_{i} \perp \mathcal{M}_{j}|\mathcal{F}'$}
  		\STATE Test $S_1$ on $\mathcal{C}_{i}$ and $\mathcal{M}_{j}$;
		\STATE Test $S_2$ on $\mathcal{M}_{j}$ and $\mathcal{C}_{i}$;
		\IF {$S_1 \wedge \neg S_2$}
			\STATE Add $i \rightarrow j$ into $\mathcal{G}$;
		\ELSIF { $\neg S_1 \wedge S_2$ }
			\STATE Add $j \rightarrow i$ into $\mathcal{G}$;
		\ELSE 
			\STATE Add $i \leftarrow \mathcal{H} \rightarrow j$ into $\mathcal{G}$;
		\ENDIF
  	\ENDIF
\ENDFOR
\RETURN $\mathcal{G}$
\end{algorithmic}
\label{alg:AffectiveCausalityDirectionLearning}
\end{algorithm}

Algorithm \ref{alg:AffectiveCausalityDirectionLearning} describes the asymmetric relations on all dependent sequence pairs of situation $\mathcal{C}$ and emotionial state $\mathcal{M}$ and detects the directions of causal edges on the underlying the affective model $\mathcal{G}$. In this algorithm, the affective sequence set $\mathcal{F} = \{ \mathcal{C}, \mathcal{M} \}$ along with the maximal timestamp $T$, the number of sequence $N$, and the confidence threshold $\alpha$ are used as inputs for the test. Given the inputs, each affective pair is tested to see whether each is independent of the other conditioned on other variables. The complexity of Algorithm \ref{alg:AffectiveCausalityDirectionLearning} is determined by the dependent pair detection and the test of $S_1$ and $S_2$. 
\section{Evaluation on Synthetic Dataset}
In this section, we examine the robustness of the proposed ALIS on the two datasets: a public dataset and a synthetic dataset for quantitative evaluation of ADNet in emotion recognition and ACNet in causality identification.  
\subsection{Affective Dynamics Network Performance}
For quantitative evaluation of ADNet in emotion classification, we performed realistic experiments by deliberately manipulating labels on a public dataset called DEAP~\cite{koelstra2012deap}.
\subsubsection{DEAP dataset}
DEAP is a public dataset of physiological signals to analyze emotions quantitatively on a 2D plane along with valence and arousal as the horizontal and vertical axes. Its physiological signals are recorded from 32-channel EEGs at a sampling rate of 512Hz using active AgCl electrodes placed according to the international 10-20 system and 13 other peripheral physiological signals from 32 participants while they watched 40 one-minute-long excerpts of music videos. The dataset rates emotions with respect to continuous valence, arousal, liking, and dominance scaled from 1 to 9 and discrete familiarity on scales from 1 to 5 using Self-Assessment Manikins~\cite{koelstra2012deap}. 

\subsubsection{Dataset Configuration}
Since ACC gathers EEG signals from a pair of electrodes and a PPG signal, we retrieved data from the eight selected pairs of electrodes and a plethysmograph on the DEAP dataset. The 64 combinations of physiological signals per video generates 81,920 physiological data points. We split the dataset into fifths: one-fifth for testing and the remaining for training and validation. The data were high-pass filtered with a 2 Hz cutoff frequency using EEGlab and the same blind source separation technique as in~\cite{kim2018deep} for removing eye artifacts in EEG signals. A constrained independent component analysis (cICA) algorithm was applied to remove motion artifacts in PPG signals. The datasets for training, testing, and validation were subject-independent, which means they were chosen entirely randomly while keeping the distribution of ratings balanced. We used the balanced datasets to evaluate the performance of our models and other methods on solving the limited amount of clean-label problems. To evalutate the unbalanced labeling problem, we stochastically changed the amount of physiological signals associated with a label while varying the distribution of labels in the training dataset. The unbalanced dataset comprised $p$ percentage of physiological signals according to a pair of random labels in valence and arousal, with the others distributed equally. The test data remained unperturbed to allow us to validate and compare our model to other methods. The highlighted one-minute EEG and plethysmography signals were split into 6 frames of 10 seconds each. They were down-sampled to 256 Hz and their power spectral features were extracted.

\subsubsection{Evaluated Methods and Metrics}
We evaluated the performance of our ADNet, comparing the results with state-of-art methods that have shown their performance on the DEAP dataset in terms of loss functions and the number of layers. Our proposed system and the following comparative models consist of 256 hidden states and 5$\times$5 kernel sizes for the input-to-state and state-to-state transition. They were trained by learning batches of 32 sequences and back-propagation through time for ten timesteps. The momentum and weight decay were set to 0.7 and 0.0005, respectively. The learning rate starts at 0.01 and is divided by 10 after every 20,000 iterations. We also performed early-stopping on the validation set.  
\begin{itemize}
    \item Model A: 1-layered ConvLSTMs with a softmax layer as a baseline classifier. The results from the model represents a typical performance of deep neural networks (DNNs). 
    \item Model B: 1-layered ConvLSTMs with the temporal margin-based loss function as in \cite{kim2018deep}. The model has shown its superiority in recognizing human emotions, increasing the distinctiveness of physiological characteristics between correct and incorrect labels.  
    \item Model C: The model is 4-layered ConvLSTMs with a softmax layer. The model was implemented as in \cite{tripathi2017using}. 
\end{itemize}
All models learn physiological signals to classify $d^2(d=2,3,4)$ affective states, which are subdivided into each of $d$ discrete ratings for valence and arousal on 2-D emotional space. We choose the mean average precision (MAP) as a metric to evaluate the performance of our system with additional \textit{Precision}, \textit{Sensitivity}, and \textit{Specificity} to report how commonly occurring classes in a training set affect the model performance.  

\subsubsection{Evaluation Results}
\begin{figure*}[t]
    \centering
    \includegraphics[width=2\columnwidth]{./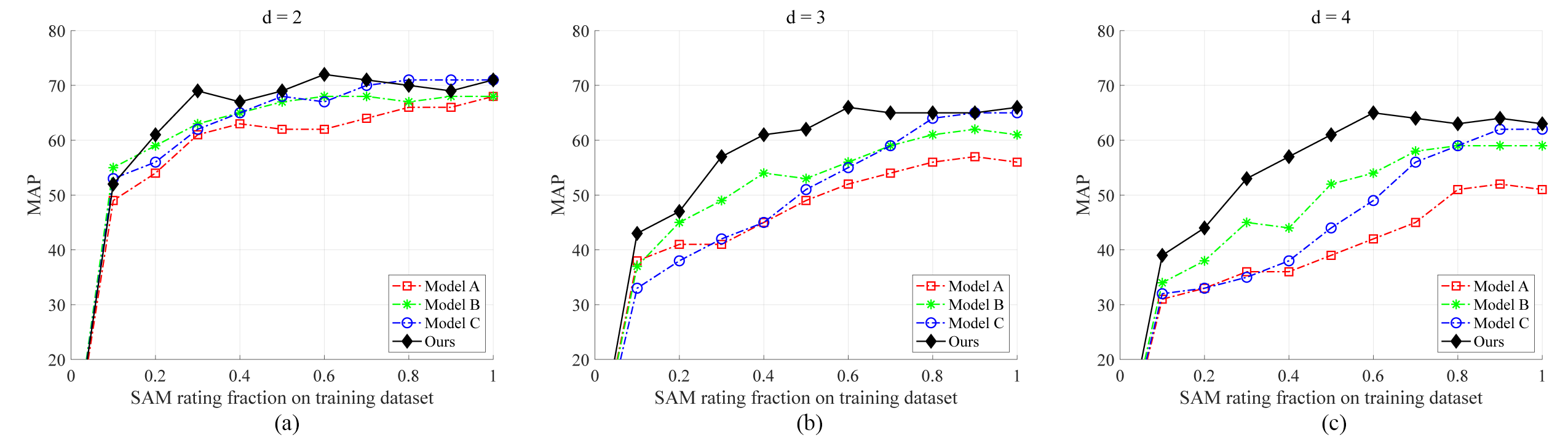}
    \caption{Test classification performance on the DEAP dataset with varying the fractions of SAM rated labels over different number of classes (valence, arousal).}
    \label{fig:MAPonDEAP}
\end{figure*}
\begin{figure*}[t]
    \centering
    \includegraphics[width=2\columnwidth]{./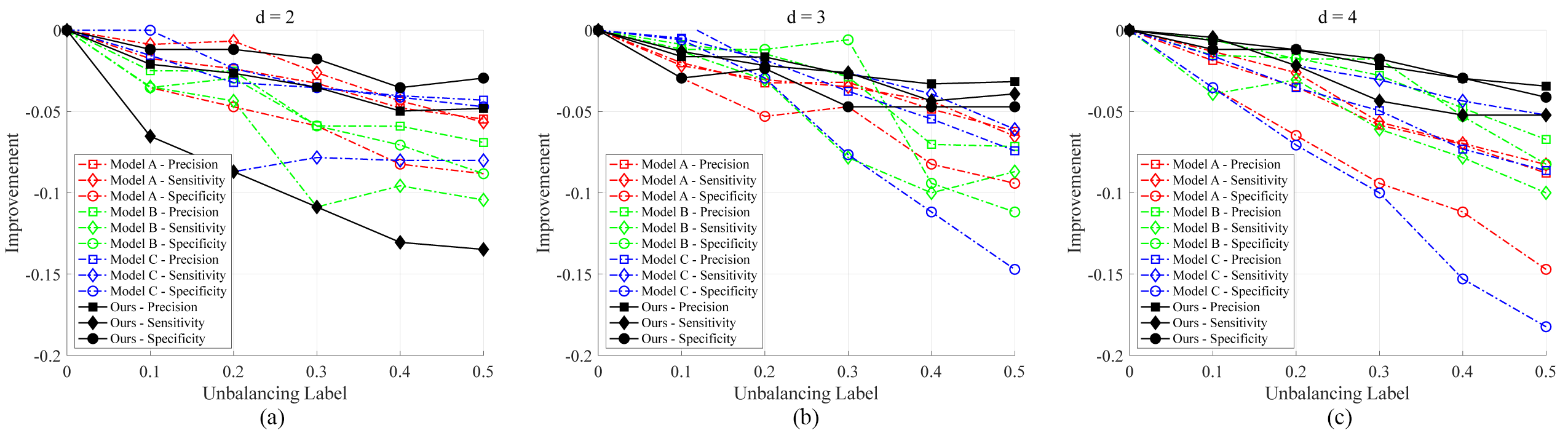}
    \caption{Test classification performance on the DEAP dataset with varying label frequencies over different classes.}
    \label{fig:ADNet_Performance_Unbalanced}
\end{figure*}
\figurename~\ref{fig:MAPonDEAP} shows that our proposed system performed better than the alternatives on the balanced dataset over all dimensions ($d=2,3,4$) unless the amount of clean labels was very large ($>0.7$). In general cases, the model C achieved the next best classification results. Its performance increased rapidly as the number of SAM-rated labels increased. On the other hand, model A consistently reported the lowest precision consistently over all configurations. This consistency could imply that model A was overfit and overconfident to some labels. With respect to overfitting, although having a specialized loss function (model B) and increasing the number of layers in DNNs (model C) improved discriminative power to classify different emotions, the limited number of clean labels seems to lead to overfitting for many classes. The three comparative methods showed worse precision when the number of labels $d$ increased, alleviating the problem of overfitting. Our approach, on the other hand, does not face this problem. In particular, it performed significantly better than the other methods when limited labels are available from $0.2$ to $0.4$. \figurename~\ref{fig:ADNet_Performance_Unbalanced} provides a closer look at how different label frequencies affect the performance in emotion recognition when the label balances were collapsed by $p$ in terms of precision, sensitivity, and specificity. The results show that the unbalancing in labels increased by the number of labels generally deteriorated the classification performance. In particular, the vertically stacked layers in model C led it to misunderstand the physiological characteristics associated with a major numbers of labels. While the other three models became overfit to common labels and reduced the overall accuracies, our approach is most effective for very common classes and shows improvement for all but a small subset of rare classes.      
\subsection{Affective Causality Network Performance}
\subsubsection{Experimental Setup}
To evaluate the affective causality identification on the proposed ACNet, a synthetic data set was generated. In the data set, pairs of affective sequences $\mathcal{F}$ are generated by simulating a Poisson point process with 10 min per timestamp and $\epsilon$ as occurrence frequency for situations $\mathcal{C}$ and emotions $\mathcal{M}$ per day. Each sequence is influenced by its causal nodes as the time-dependence influence function with exponential probability $p(\bigtriangleup_t, \eta) = \eta e^{(-\bigtriangleup_t \eta)}$, where $\bigtriangleup_t$ is the time interval between $t$ and the causal sequence's most recent state, and $\eta$ is the average influence lag. 

\subsubsection{Evaluated Methods and Metrics}
We evaluated the performance of ACNet, comparing it against the transfer entropy (TR) method~\cite{ver2012information} and Granger's causality (GC) method~\cite{seth2012assessing}. Two groups of causal structures were designed for experiments. The first group has the causal structures of directed graphs without latent variables, and the second group consists of directed graphs with latent factors. Given $n$ sequences and the average in-degree $d_g$ of the graph, a pair of two emotion and situation nodes and a directed edge between the pairs into the graph are selected until there are $d_g \cdot n$ edges in the graph. For the second group, confounding factors are selected by $n_c$ independent pair of nodes and an additional latent factor $\mathcal{H}$ underlying two edges is added into the causal graph. We choose \textit{Precision}, \textit{Recall}, and \textit{F1}-score as metrics to evaluate each type of causality.  

\subsubsection{Results}
\begin{figure*}[t]
    \centering
    \includegraphics[width=2\columnwidth]{./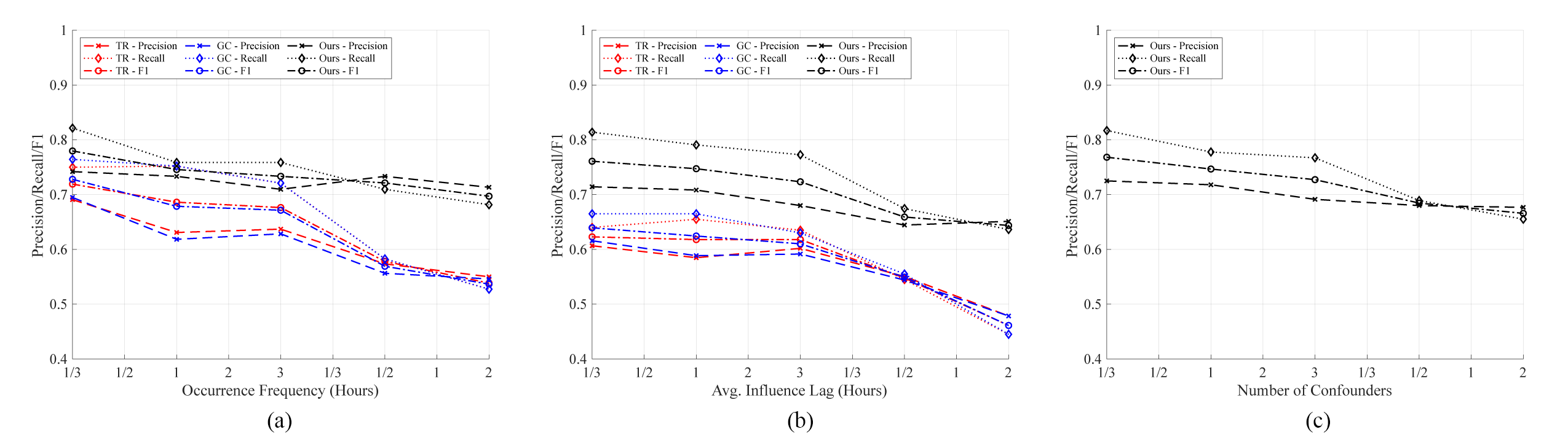}
    \caption{Affective identification performance on the synthetic dataset varying occurrence frequency, average influence lag, and numbers of confounder parameters}
    \label{fig:ACNet_Performance}
\end{figure*}
\figurename~\ref{fig:ACNet_Performance} shows the performance of ACNet and the two other methods on different causal structures and data generation parameters with varying occurrence frequencies $\epsilon$, and average influence lag $\eta$. Overall, ACNet consistently outperformed the other methods. As shown in \figurename~\ref{fig:ACNet_Performance}(a), the occurrence frequency reflects the sparsity of the sequences such as the number of timestamps with recorded situations. The performance of TR and GC declines with increasing sparsity. On the other hand, ACNet maintained its performance at around 0.7 even when there was only one occurrence in three hours. This result implies that our model is effective in solving the causal discovery problems on the sparse affective situation sequences. \figurename~\ref{fig:ACNet_Performance}(b) shows the sensitivity with different influence lags. The recall of our model decreased with the increasing average influence lags because larger influence lags means longer dependence on the previous states. The precision of our model was not sensitive to the influence lag, keeping a performance between 0.65 and 0.8, while the precision of other methods decreased as the influence lag increased. These results imply our model is capable of catching long-term dependence for learning causal direction.  

\section{Evalutation on Real-World Dataset}
Underlying the quantitative analysis of our model on the public and synthetic datasets, we first built a real-world dataset called the Affective Lifelog Dataset, where participants used our ACC in their daily life. We then evaluated the performance of the proposed ALIS with respect to physiological discrimination in emotion recognition and user agreement in causal identification. 

\subsection{Affective Lifelog Dataset}
\begin{figure}[t]
    \centering
    \includegraphics[width=1\columnwidth]{./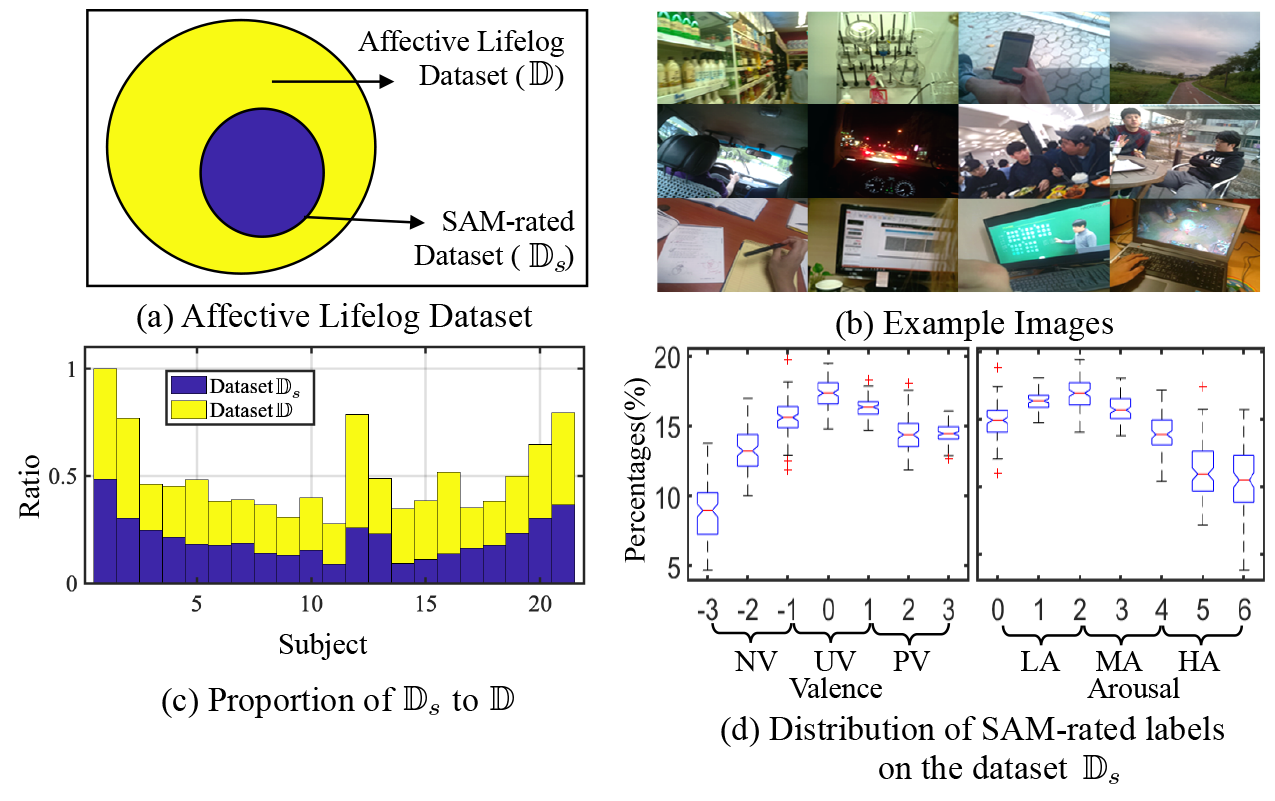}
    \caption{Affective Situation Dataset $\mathbb{D}$ and its subset $\mathbb{D}_s$ which contains SAM-rated situations. (b) Example images in the dataset $\mathbb{D}$. (c) Proportion of the subset $\mathbb{D}_s$ and the dataset $\mathbb{D}$. $N_s = 378$ for the subject 1.(d) Distribution of the SAM-rated situations in valence and arousal labels on the subset $\mathbb{D}_s$.}
    \label{fig:DatasetOverview}
\end{figure}
\subsubsection{Data Acquisition}
The dataset consists of two modalities of physiological signals, accelerometer signals, and frontal images obtained by ACC from 16 male and 5 female university students aged from 21 to 35 (26.4 $\pm$ 4.87) years. Our requirement for participation was to perform at least one common task of a university student, such as conducting research, taking classes, or having a discussion with colleagues. We required them to wear the device over six hours per day for up to 45 days with \$10 compensation per day. This experiment was approved by the Institutional Review Board (IRB) in Human Subjects Research. 

To identify individual specific affective contents and assign the SAM ratings to them as ground-truth labels, we asked the participants if they had any affective contents that had elicited a specific feeling, and how the contents changed their emotion before and after the elicitation. The life contents were perceived as breakthroughs to change their mentality. The changes were rated by the SAM scaled from 0 to 6 for arousal and valence ratings. Furthermore, we retrieved the following affective contents manually which are considered to potentially affect mental status stress: 1) watching movies, 2) drinking coffee, 3) drinking green tea, 4) hanging out with friends, 5) playing games, 6) drawing a picture, 7) taking a walk, 8) reading a book, 9) eating food, 10) studying at a desk, 11) reading research papers on a computer monitor, 12) conducting researches in a laboratory, and 13) playing with media devices. In such situations, the participants performed the SAM ratings every five days. 

\subsubsection{Affective Lifelog Dataset}
\figurename~\ref{fig:DatasetOverview} summarizes the Affective Lifelog Dataset $\mathbb{D}$. The situations rated by the SAM consist of a subset $\mathbb{D}_s$ with pairs of emotion labeling $y$ = ($\mathcal{V}$, $\mathcal{A}$) scaled from 0 to 6 for valence $\mathcal{V}$ and arousal $\mathcal{A}$ ratings. The labeling $y$ = ($\mathcal{V}$, $\mathcal{A}$) is used as ground-truth to evaluate the performance of our proposed system. As shown in \figurename~\ref{fig:DatasetOverview}(c), the dataset has a limited amount of labeling data. The ratios of the dataset $\mathbb{D}$ and the subset $\mathbb{D}_s$ are 0.418 from all participants. Furthermore, the distribution of labels is unbalanced. These challenge issues are consistent with our understanding describe in (C.1) and (C.2). \figurename~\ref{fig:DatasetOverview}(b) shows some situations in the dataset $\mathbb{D}$ from our real-world experiments. Participants have experienced various situations in daily life such as driving a car, reading a research paper, playing games, and taking a walk. 

\subsection{Experimental Setup}
\subsubsection{Train/Test/Evaluation on the Dataset $\mathbb{D}$}
From the real-world dataset $\mathbb{D}$, we grouped affective labels ($\mathcal{V}$, $\mathcal{A}$) in valence and arousal into seven discrete affective states: NVHA, NVMA, NVLA, UVLA, PVLA, PVMA, and PVHA, comprising low (LA), mid (MA), and high (HA) arousal and negative (NV), neutral (UV), and positive (PV) valence ratings. We should note that two states (UVMA and UVHA) were omitted since their occurrence was extremely low. 40\% of the dataset $\mathbb{D}_s$ was used for training. We then retrieved 10,000 pieces of physiological data per affective state, which is 70,000 pieces of physiological data on total of seven states for every participant on the dataset $\mathbb{D}$. ADNet first predicts $\tilde{y}$ on the dataset $\mathbb{D}_s$ and produces pseudo-labels $\bar{y}$ with learning physiological patterns $\mathcal{X}$. Then, the model uses the patterns with the SAM-rated rating $y$ to produce the clean labels $\hat{y}$. The test data remained unperturbed to validate and compare our model to other methods. Since physiological signals, in particular, EEG signals, are vulnerable to motion artifacts~\cite{daly2013automated}, we developed a strategy to improve the quality of EEG signals by abandoning EEG signals highly correlated with motion artifacts rather than separating and removing motion artifacts in EEG signals occurring due to body movement~\cite{daly2012does, alarcao2017emotions}. To pursue this strategy, we subdivided the EEG signals into two groups separated by varying the accelerometer data. From each of the two groups we extracted the following EEG features: 1) Mean power, 2) Maximum amplitude, 3) Standard deviation of the amplitude, 4) Kurtosis of the amplitude, and 5) Skewness of the amplitude. The features are metrics to describe key characteristics of clean EEGs~\cite{jenke2014feature}. After representing the features into two-dimensional space using principal component analysis (PCA), we computed the Bhattacharyya distance between the two groups over the two-dimensional space as a differentiator between the clean EEG and the contaminated EEG based on the maximum distance between the two groups.

\subsubsection{Network Settings}
ADNet is composed of two-layered networks with 512 and 256 hidden states and has 5$\times$5 size of kernels for the input-to-state and state-to-state transitions. To train the network, we used learning batches of 32 sequences, set the learning rate as 0.01 initially, and divided the rate by ten after every 20,000 iterations. The weight decay and momentum were set to 0.0005 and 0.6, respectively. Back-propagation through time was performed for ten timesteps. We also performed early-stopping on the validation set. For ACNet, we only selected pairs of the affective sequence $\mathcal{F}$ which were associated with clean physiological signals from all participants with every 10 min as a timestamp.    
\subsection{Evaluation Results}
\subsubsection{Performance of ALIS on Recognizing Emotions and Identifying Affective Causalities}
\begin{table}[t]
\begin{scriptsize}
\renewcommand{\arraystretch}{1.3}
\caption{Performance of ADNet for classifying the seven emotional states and ACNet for identifying the affective causalities on the dataset $\mathbb{D}$. }
    \label{table:ALISresultLifelog}
\centering
\begin{tabular}{c||ccccccc} \hline
    ACNet & \multicolumn{7}{c}{ADNet}                      \\
      & NVHA & NVMA & NVLA & UVLA & PVLA & PVMA & PVHA \\
\hline	0.74  & 0.61   & 0.59   & 0.55   & 0.64   & 0.52   & 0.56   & 0.55  \\ 
\hline
\end{tabular}
\end{scriptsize}
\end{table}
\begin{figure}[t]
    \centering
    \includegraphics[width=1\columnwidth]{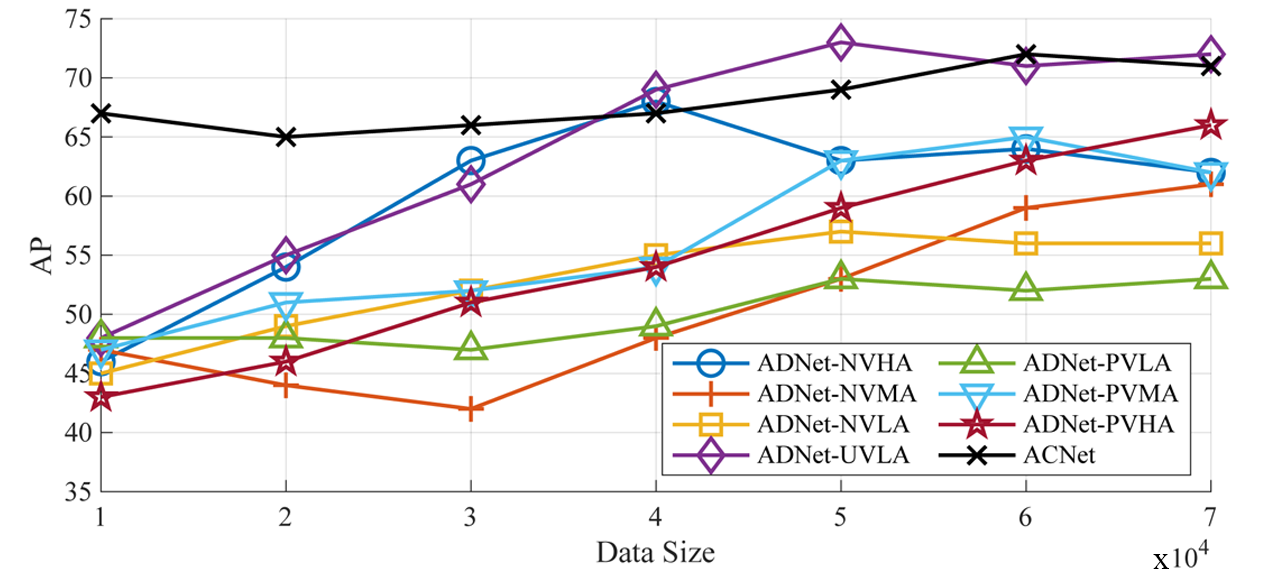}
    \caption{Performance in emotion recognition and affective causality identification on the real-world dataset $\mathbb{D}$ increasing the dataset size over different emotional states and affective situations}
    \label{fig:Accuracy_ALIS_DifferentSize_Lifelog}
\end{figure}
\tablename~\ref{table:ALISresultLifelog} reports the average precision in recognizing emotions and identifying affective causalities over all participants on the dataset $\mathbb{D}$. The performance on the UVLA state showed the highest results over all participants.

This implies that when participants are in a UVLA state, such as calm and relaxed feelings, their physiological signals fluctuate in common patterns, which helps ALIS to learn their characteristics. It can be also attributed that the percentages of affective situations rated with the labels UV and LA were higher than others on the dataset $\mathbb{D}$. On the other hand, the performance in classifying valence ratings associated with low arousal ratings besides LAUV, such as NVLA and PVLA states, were relatively lower than other arousal ratings. In particular, PVLA had the lowest accuracy. This observation may imply that classifying emotions by valence can be improved by considering their associations with arousal. Affective causalities in the 13 situations were identified with a precision of 0.74. Although ACNet identified affective causalities by regarding the prior results of ADNet recognizing affective states, the performance was consistently higher than on detecting emotional states. This can be attributed to ACNet working well on eliminating false causal relationships built on wrong emotions. 

Unlike the DEAP dataset, our real-world dataset $\mathbb{D}$ records everyday activities with physiological signals. Hence, the inter- and intra-day variability in physiological signals can determine the performance of ALIS in understanding affective dynamics. \figurename~\ref{fig:Accuracy_ALIS_DifferentSize_Lifelog} shows the performance in emotion recognition and causality identification on different amounts of physiological data in days. The accuracy generally improved when sufficient data was available. ADNet classified HA states better when more of their associated signals were provided. On the other hand, classifying negative states such as NVLA and PVLA showed the smallest improvement with increased data sizes. These results could imply that the affective dynamics in valence requires the development of elaborate deep learning architectures more than the provision of sufficient physiological data. Since causal identification depends on the prior emotion recognition, the performance in emotion recognition is another key in causality. However, this experiment shows that our system does not face this problem. It consistently achieved high scores with small increaments. 

\subsubsection{Analysis of Emotion Recognition}
\begin{figure}[t]
    \centering
    \includegraphics[width=1\columnwidth]{./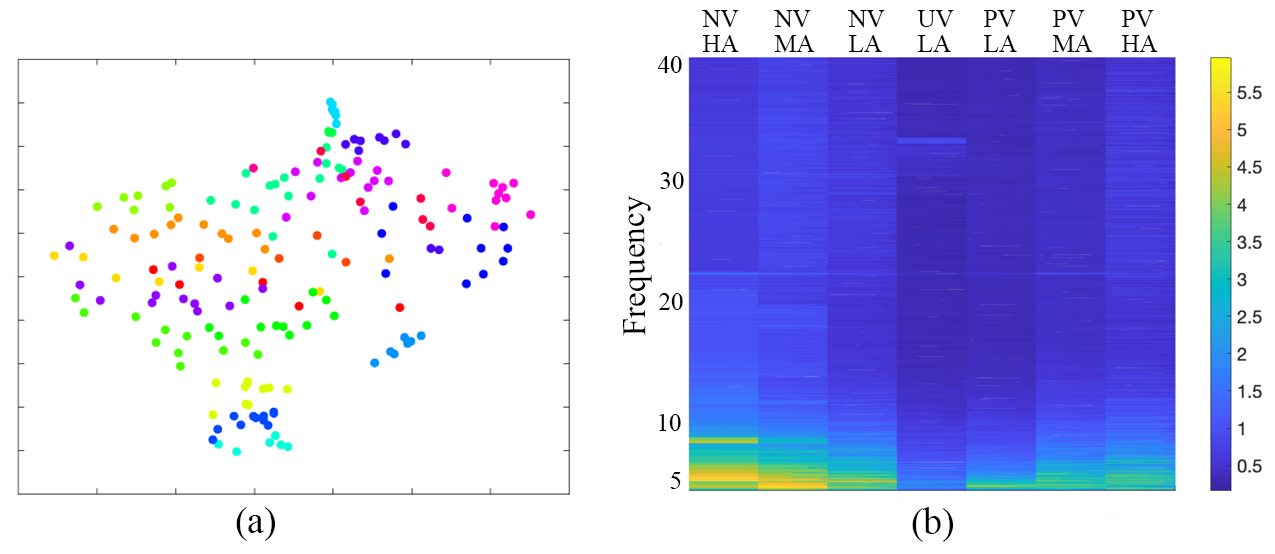}
    \caption{(a) Clustering of the UVLA states from the participants using t-SNE applied to physiological features $\mathcal{X}$. (b) The shared physiological representation visualized by the grand average of the feature $\mathcal{X}$ over frequency bands in the seven affective states. }
    \label{fig:tsne_grandmean}
\end{figure}
\figurename~\ref{fig:tsne_grandmean}(a) shows the multimodal physiological features $\mathcal{X}$ in a two-dimensional space obtained using t-distributed stochastic neighbor embedding (t-SNE)~\cite{maaten2008visualizing}, a popular technique for unsupervised dimensionality reduction and visualization. The features of different participants in an affective state cluster in the projected space, revealing their high variety. Despite their heterogeneity, ADNet is capable of capturing some shared physiological characteristics. To investigate the physiological phenomena observed when emotions are classified using ADNet, we visualize the grand average of brain lateralization features in (\ref{eq:BrainLateralizationFeature}) across participants in valence and arousal ratings. We should note that heart-related features have served as essential elements reflecting the function of ANS. However, in this section, we focus more on brain lateralization, as it has relatively large inter- and intra-subject variability. \figurename~\ref{fig:tsne_grandmean}(b) shows the commonly shared frequencies over all participants in the seven emotional states. We found that alpha and beta bands were activated when most emotional states except UVLA and PVLA were elicited. Strong negative related feelings such as NVHA and NVMA led to an increase in physiological changes in alpha and beta. The other two states were characterized by either the theta or gamma band. This indicates emotional reactions in real-world situations lead to the activation of physiological signals in alpha and beta while stability in emotion maintains physiological signals in the theta or gamma bands. Although several alternatives have been suggested in reports on the neuro-physiological correlates of affective states, this result is in line with our previous work~\cite{kim2018deep}. Our findings may also be justified by the fact that when some negative but approach-related emotions, such as ``anger" that would be lateralized to the left hemisphere, are induced, they lead to increases in alpha band activity in the left anterior and the left temporal regions in the beta bands. We also found that emotional lateralization has also affects on reacting emotions in cases of arousal correlated with valence. When increasing arousal from low to high within the same level of valence states, there are slight increments in the theta band. This observation may reflect the inter-correlations between valence and arousal, as reported in~\cite{koelstra2012deap}. 

\subsubsection{Causality Discovery}
\begin{figure}[t]
    \centering
    \includegraphics[width=1\columnwidth]{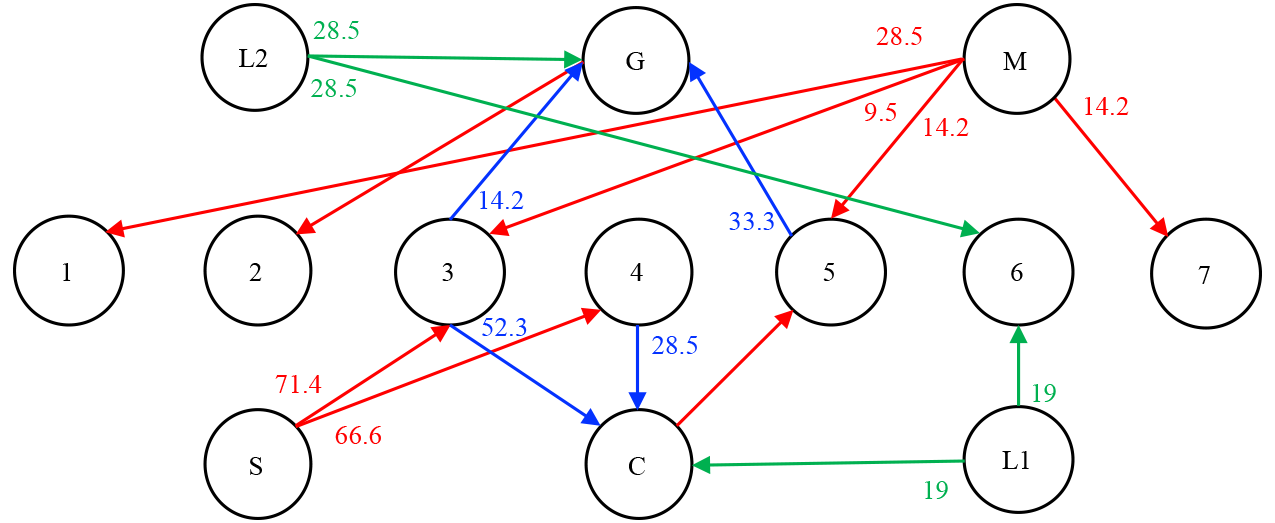}
    \caption{Causal Structures between the four situations and the seven emotional states on the real-world dataset $\mathbb{D}$. Nodes from 1 to 7 represent affective states NVHA, NVMA, NVLA, UVLA, PVLA, PVMA, and PVHA. Situation nodes are denoted as S, G, C, M and L for \textit{Studying on a desk}, \textit{Playing Games}, \textit{Drinking coffee/green tea in a cafeteria}, \textit{Watching movies}, and a latent factor. Red lines indicates the asymmetric causality from the situation to the emotion states. Blue indicates the opposite causalities. Green lines show the latent factors between the situation and emotional states. }
    \label{fig:ResultCausalIndentificationLifelog}
\end{figure}
To better understand the overall causal pairs in daily life, we report three types of causal networks of the four frequent stress relievers on the dataset $\mathbb{D}$. We choose the four most frequent situations: ``Studying at a desk,'' ``Playing games,'' ``Drinking coffee/green tea in a cafeteria,'' and ``Watching movies.'' The situations and their causality were manually browsed in the results of our system. \figurename~\ref{fig:ResultCausalIndentificationLifelog} shows the causality structure over all participants. Edges are asymmetric affective causalities between two nodes. Numbers on edges are percentages of causalities affected by the relationship. Note that the graph does not include cascade flows; namely, in case of $1 \rightarrow 2 \rightarrow 3$, it has only a causal relation between node $1 \rightarrow 2$ and $2 \rightarrow 3$, but the causality does not propagate $1 \rightarrow 3$.

Participants changes their behaviors when they feel specific emotions. First of all, we found that while studying at a desk, most users felt the emotional states NVLA and UVLA, which are close to calm and stress, but any feeling did not affect students' desire to study. This phenomenon indicates that the activity can be a major emotional cause regulating their feelings of being stressed in repeated school routines. In other words, the participants has studied either habitually or for no particular emotional reason, but they were stressed by studying. Interestingly, these types of stressed feelings led them to change their behaviors. Some people had coffee when their feelings rated as having negative valence and low arousal. Furthermore, in line with affective causality, drinking coffee had a causal effect on overcoming emotional negativity, increasing valence. While most activities causes a given emotion to a particular feeling, ``Watching a movie" affected multiple emotional states. This can be attributed to individual emotional acceptance of a movie or characteristics of different movie genres. Similarly, when users play games, they feel either NVMA or PVMA emotional states. The polarity in valence from the two emotional states implies playing games is accompanied by emotional elements of fatigue, while it helps to lead positive feeling. From a few users, we found there exist two hidden factors between emotions and situations. When users had the latent factor $L2$, they played the game while feeling excited. Similarly, the latent factor $L1$ bridged users to drink coffee/green tea with a happy feeling. Neither connection had been established without the two factors. 
\section{Conclusion}
We presents a new wearable system to detects emotional changes and find causational relationships in daily life, based on the new affective model of interaction behavior. By applying our proposed model to real-world dataset, our approach is able to find meaningful causal connections between emotions and behaviors, even when confounder variables potentially affect human emotions and behaviors. In the future, we will explore the possibilities of social interaction behavior caused by individual emotional changes. It is also interesting and even more challenging to effectively implement causality identification in the complex human behaviors and situational analysis of daily life. 
\section*{Acknowledgment}
This work was supported by Institute for Information \& communications Technology Promotion(IITP) grant funded by the Korea government(MSIT) (No.2017-0-00432, No.2017-0-01778).

\ifCLASSOPTIONcaptionsoff
  \newpage
\fi



\bibliographystyle{IEEEtran}
\bibliography{IEEEabrv,./2018_TCYB_ALIS}

\begin{IEEEbiography}[{\includegraphics[width=1in,height=1.25in,clip,keepaspectratio]{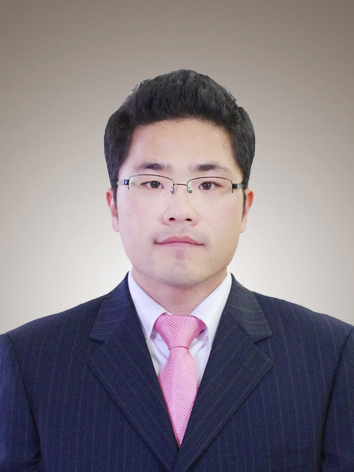}}]{Byung Hyung Kim}
received the B.S. degree in computer science from Inha University, Incheon, Korea, in 2008, and the M.S. degree in computer science from Boston University, Boston, MA, USA, in 2010. He is currently working toward the Ph.D. degree at KAIST, Daejeon, Korea. His research interests include affective computing, brain-computer interface, computer vision, assistive and rehabilitative technology, and cerebral asymmetry and the effects of emotion on brain structure.
\end{IEEEbiography}

\begin{IEEEbiography}[{\includegraphics[width=1in,height=1.25in,clip,keepaspectratio]{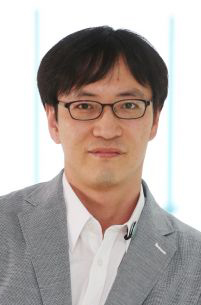}}]{Sungho Jo}
(M'09) received the B.S. degree in school of mechanical \& aerospace engineering from the Seoul National University, Seoul, Korea, in 1999, the S.M. in mechanical engineering, and Ph.D. in electrical engineering and computer science from the Massachusetts Institute of Technology (MIT), Cambridge, MA, USA, in 2001 and 2006 respectively. While pursuing the Ph.D., he was associated with the Computer Science and Artificial Intelligence Laboratory (CSAIL), Laboratory for Information Decision and Systems (LIDS), and Harvard-MIT HST NeuroEngineering Collaborative. Before joining the faculty at KAIST, he worked as a postdoctoral researcher at MIT media laboratory. Since December in 2007, he has been with the department of computer science at KAIST, where he is currently Associate Professor. His research interests include intelligent robots, neural interfacing computing, and wearable computing.
\end{IEEEbiography}

\end{document}